\newtheorem{theorem}{Theorem}
\newtheorem{lemma}{Lemma}
\newtheorem{prop}{Proposition}
\newtheorem{defn}{Definition}
\newtheorem{remark}{Remark}
\title{RDP-GAN: A  R\'{e}nyi-Differential Privacy based Generative Adversarial Network}
\begin{document}
\author{\IEEEauthorblockN{Chuan Ma, \emph{Member, IEEE}\thanks{C. Ma, J. Li and K. Wei are with the School of Electronic and Optical Engineering, Nanjing University of Science and Technology, Nanjing, China. (e-mail: \{chuan.ma, jun.li, kang.wei\}@njust.edu.cn).},
Jun Li, \emph{Senior Member, IEEE},
Ming Ding, \emph{Senior Member, IEEE},\\
\thanks{M. Ding is with Data61, CSIRO, Australia (e-mail: Ming.Ding@data61.csiro.au).}
Bo Liu, \emph{Senior Member, IEEE},
\thanks{B. Liu is with University of Technology Sydney, NSW 2007, Australia (email:
bo.liu@uts.edu.au).}
Kang Wei, \emph{Student Member, IEEE},\\
Jian Weng, \emph{Member, IEEE},
\thanks{J. Weng is with Jinan University, Guangzhou 510632, China (email: cryptjweng@gmail.com).}
and H. Vincent Poor, \emph{Fellow, IEEE}
\thanks{H. V. Poor is with the Department of Electrical Engineering, Princeton University, Princeton, NJ 08544 USA (e-mail: poor@princeton.edu).}
}}

\maketitle
\begin{abstract}
Generative adversarial network (GAN) has attracted increasing attention recently owing to its impressive ability to generate realistic samples with high privacy protection. Without directly interactive with training examples, the generative model can be fully used to estimate the underlying distribution of an original dataset while the discriminative model can examine the quality of the generated samples by comparing the label values with the training examples. However, when GANs are applied on sensitive or private training examples, such as medical or financial records, it is still probable to divulge individuals' sensitive and private information. To mitigate this information leakage and construct a private GAN, in this work we propose a  R\'{e}nyi-differentially private-GAN (RDP-GAN), which achieves differential privacy (DP) in a GAN by carefully adding random noises on the value of the loss function during training. Moreover, we derive the analytical results of the total privacy loss under the subsampling method and cumulated iterations, which show its effectiveness on the privacy budget allocation. In addition, in order to mitigate the negative impact brought by the injecting noise, we enhance the proposed algorithm by adding an adaptive noise tuning step, which will change the volume of added noise according to the testing accuracy. Through extensive experimental results, we verify that the proposed algorithm can achieve a better privacy level while producing high-quality samples compared with a benchmark DP-GAN scheme based on noise perturbation on training gradients.
\end{abstract}

\begin{IEEEkeywords}
Generative Adversarial Network, R\'{e}nyi-Differential Privacy, Adaptive Noise Tuning Algorithm
\end{IEEEkeywords}

\section{Introduction}
Recent technological advancements are transforming the ways in which data are created and processed. With the advent of the Internet-of-things (IoT), the number of intelligent devices in the world is rapidly growing in the last couple of years. Many of these devices are equipped with various sensors and increasingly powerful hardware, which allow them to not just collect, but more importantly, process data at unprecedented scales.
In the concurrent development, artificial intelligence (AI) has revolutionized the ways that information is utilized with ground breaking successes in areas such as computer vision, natural language processing, voice recognition, etc \cite{wang2018edge}. Therefore, there is a high demand for harnessing the rich data provided by distributed devices to improve machine learning models.

In the past few years, deep learning has demonstrated largely improved performance over traditional machine learning methods in various applications, e.g., image understanding \cite{He_2016_CVPR}, speech recognition \cite{6638947}, cancer analysis \cite{6977954}, and the game of GO \cite{silver2016mastering}. The great success of deep learning is owing to the development of powerful computing processor and the availability of massive data for training the neural networks.
However, there exists domains where the accessibility of this huge data is not fully granted. For example, the sensitive medical data are usually not open-access in most countries. Thus, building a high-quality analytical model remains to be challenging at present.
At the same time, data privacy has become a growing concern for clients.  In particular, the emergence of centralized searchable data repositories has made the leakage of private information an urgent social problem, e.g., health conditions, travel information, and financial data. Furthermore, the diverse set of open data applications, such as census data dissemination and social networks, place more emphasis on privacy concerns. In such practices, the access to real-life datasets may cause information leakage even in pure research activities. Consequently, privacy preservation has become a critical issue.

Fortunately, generative models \cite{makhzani2015adversarial} provide us with a promising solution to alleviated the data scarcity issue. By sketching the data distribution from a small set of training data, it is feasible to sample from the input distribution and further generate more synthetic samples. By combining the complexity of deep neural networks and game theory, the Generative Adversarial Networks (GANs) \cite{goodfellow2014generative} and its variants have demonstrated impressive performance on modelling the underlying data distribution, generating high quality ``fake" samples that are hard to be distinguished from real ones, i.e., synthetic samples. In this way, the availability of GANs can fully facilitate the generated data and well protect the privacy of individuals.

However, the GANs can still implicitly disclose private information on the training examples. The adversarial training procedure and the high model complexity of deep neural networks, jointly encourage a distribution that is concentrated around training samples. By repeated sampling from the distribution, there is a considerable chance of recovering the training examples. For example, the authors in \cite{hitaj2017deep} introduced an active inference attack model that can reconstruct training samples from sampling generated data. Therefore, it is highly demanded to have generative models that not only generates high quality samples but also protects the privacy of the training data. Indeed, a private GAN has its potential to address this challenging problem.

To preserve the data privacy, one way is to make the synthetic data differentially private with respect to the original data. To do this, the authors in \cite{jordon2018pate} modified the training procedure of the discriminator to be differentially private by using the Private Aggregation of Teacher Ensembles (PATE) framework. The post-processing theorem then guarantees that the generator of GAN will also be differentially private. Unlike \cite{jordon2018pate}, the work in \cite{wang2018subsampled} introduced a subsampled randomized algorithm that first takes a subsample for the dataset generated through a subsampling procedure, and then applies a known randomized mechanism $\mathcal{M}$ on the subsampled data points. It is important to exploit the randomness in subsampling because if the original mechanism is differentially private, then the subsampled mechanism also obeys differentially privacy, referred to the ``privacy amplification lemma". To further conduct the sampling rate,
\cite{xie2018differentially,8636556} introduced a key idea that adding random noise to the updating gradients of the discriminator during training, and provided privacy guarantees. Moreover, norm clipping is used to bound the parameters while it will also control the maximum influence of added noises. However, bounding on the added noise will always violate the privacy level, as its maximum scale of noise is limited to one in their assumptions. To keep track of privacy loss, differential privacy (DP) that measures the difference in output between two input databases differing by at most one element, has been evolved in \cite{dwork2016concentrated}. By ensuring each gradient descent step is differentially private, the final output model satisfies a certain level of DP given by the strong composition property. Moreover, to obtain a tighter estimation, Abadi et al. \cite{abadi2016deep} proposed the moment accountant methods, which track the log moments of the privacy loss variable under random sampling.

Nevertheless, the training of GAN usually suffers from its instability, and directly adding random noises on the updating gradients will definitely harm the learning performance. Therefore, there needs a more subtle design on the privacy protection scheme. In addition, the architecture of GAN evolves multiple iterations. In more details, the fashion Wasserstein GAN \cite{arjovsky2017wasserstein} will train the generator multiple times and train the discriminator once to obtain better results. Thus, to conduct the privacy analysis, it is necessary to give an accurate estimation on each iteration.
To sum up, there are three key tasks to solve in the design of a private GAN:
\begin{itemize}
  \item Design a differentially private algorithm for GAN while guaranteeing its performance.
  \item Obtain a tight privacy loss estimation under subsampling.
  \item Obtain an accurate privacy budget calculation for cumulated iterations.
\end{itemize}

Accordingly, in this work, we propose a differentially private GAN. To accurately estimate the privacy loss of the proposed algorithm, we elaborate on the definition of R\'{e}nyi-DP which can achieve a tighter privacy analysis compared to the moment accountant method. Furthermore, we improve the proposed algorithm by designing an adaptive noise tuning algorithm, which can achieve better learning performance. Specifically, the contributions of this work are listed as follows:

\begin{itemize}
  \item To achieve a differentially private GAN, we propose a method, where random noises are added on the value of the loss function of a discriminator in each iteration. Different from previous works in \cite{xie2018differentially,8636556}, the proposed algorithm does not need an additional norm clipping method as the chosen loss function can be bounded naturally.
  \item To investigate the privacy loss of the proposed algorithm, we first elaborate on the effectiveness of R\'{e}nyi-DP in the subsampling step, which is proved to achieve a tighter analysis compared with the moment accountant method. Then we obtain the expression of the total privacy cost for cumulated iterations.
  \item To obtain better learning performances, we further improve the proposed algorithm by adding an adaptive noise tuning step, called the RDP-ANT algorithm. The improved algorithm can adaptively change the value of added noise according to the testing accuracy, and eventually consummate the algorithm.
  \item We conduct sufficient numerical experiments to verify the effectiveness of the analytical results. Compared with other algorithms, the proposed algorithm will inject less noise and obtain better performance in turn when achieving the same privacy level. Moreover, we also show the superiority of the RDP-ANT algorithm compared with other noise decay methods.
\end{itemize}

The rest of the paper is structured as follows.
We introduce the back ground in Section II,
and propose the RDP-GAN algorithm with comprehensive privacy loss analysis in Section III.
In Section IV, we show the details of the improved adaptive noise tuning algorithm.
Then, we validate the privacy analysis through extensive simulations and discuss the network performance in Section V,
and conclude the paper in Section VI.
\section{Preliminary}
In this section, we present some background knowledge of DP and GAN.
\subsection{($\epsilon, \delta$)-Differential Privacy}
We first recall the standard definition of ($\epsilon, \delta$)-DP \cite{dwork2014algorithmic}.
\begin{defn}
($\epsilon, \delta$-DP). A randomized mechanism $f: \mathcal{X} \mapsto \mathcal{R}$ offers ($\epsilon, \delta$)-DP if for any adjacent $X, X' \in \mathcal{X}$ and $S \subset \mathcal{R}$,
\begin{equation}
\Pr \left[ {f\left( X \right) \in S} \right] \le {e^\epsilon }\Pr \left[ {f\left( {X'} \right) \in S} \right] + \delta,
\end{equation}
where $f(X)$ denotes a random function of $X$.
\end{defn}

The above definition is contingent on the notion of adjacent inputs $X$ and $X'$, which is domain-specific and is typically chosen to capture the contribution to the mechanism's input by a single individual. Moreover, to avoid the worst-case scenario of always violating privacy of a $\delta$ fraction of the dataset, the standard recommendation is to choose $\delta \ll 1/N$ or even $\delta = \textrm{negl}(1/N)$, where $N$ is the data size.

The definition of ($\epsilon, \delta$)-DP is initially proposed to capture privacy guarantees of the Gaussian mechanism, defined as follows:
\begin{equation}
{\textbf{G}_\sigma }f(X) \buildrel \Delta \over = f(X) + \mathcal{N}(0,{\sigma ^2}),
\end{equation}
where $\mathcal{N}(0,\sigma ^2)$ represents a random noise, which follows the Gaussian normal distribution with mean $0$ and standard derivation $\sigma$.
Elementary analysis shows that the Gaussian mechanism satisfies a continuum of incomparable ($\epsilon, \delta$)-differential privacy \cite{dwork2014algorithmic}, for all combinations of $\epsilon < 1$ and $\sigma > {\Delta _2}f\sqrt {2\ln 1.25/\delta } /\epsilon $, where the $l_2$-sensitivity of $f$ is defined as
\begin{equation}
{\Delta _2}f \buildrel \Delta \over = \mathop {\max }\limits_{X,X' \in \mathcal{X}} {\left\| {f(X){{ - f(X')}}} \right\|_2},
\end{equation}
and taken over all adjacent inputs $X$ and $X'$.

We also list the proposition that will be used in the proof.
\begin{prop} \label{pp}
(Post-processing.) Let $\mathcal{M}$ be an ($\epsilon, \delta$)-differentially private algorithm and let $\mathcal{F} : \mathcal{O} \rightarrow \mathcal{O}'$ where $O'$ is any arbitrary space. Then $\mathcal{F} \circ \mathcal{M}$ is $(\epsilon, \delta)$-differentially private.
\end{prop}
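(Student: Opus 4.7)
The plan is to reduce the post-processing statement directly to the definition of $(\epsilon,\delta)$-DP by pulling back an arbitrary measurable set in the output space $\mathcal{O}'$ to a measurable set in $\mathcal{O}$, and then invoking the assumed DP guarantee of $\mathcal{M}$ on that preimage. Concretely, I would fix adjacent inputs $X, X' \in \mathcal{X}$ and an arbitrary measurable $S' \subset \mathcal{O}'$, and show $\Pr[\mathcal{F}(\mathcal{M}(X)) \in S'] \le e^\epsilon \Pr[\mathcal{F}(\mathcal{M}(X')) \in S'] + \delta$.

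First I would handle the case where $\mathcal{F}$ is deterministic. In that case, define the preimage $S \triangleq \mathcal{F}^{-1}(S') = \{o \in \mathcal{O} : \mathcal{F}(o) \in S'\} \subset \mathcal{O}$. Then, tautologically, the event $\{\mathcal{F}(\mathcal{M}(X)) \in S'\}$ coincides with $\{\mathcal{M}(X) \in S\}$, so applying the $(\epsilon,\delta)$-DP guarantee of $\mathcal{M}$ to the set $S$ yields the desired inequality. This step is essentially bookkeeping, provided one is willing to assume $S$ is measurable whenever $S'$ is (which holds for any reasonable choice of $\mathcal{F}$).

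Next I would extend the argument to the case of a randomized post-processing map $\mathcal{F}$, which is really the case the proposition is meant to cover. The natural trick is to condition on the randomness of $\mathcal{F}$: write $\mathcal{F}(o) = g(o, R)$ where $R$ is an independent source of randomness, and observe that for each fixed realization $R = r$, the map $o \mapsto g(o,r)$ is deterministic, so the previous step gives
\begin{equation*}
\Pr[g(\mathcal{M}(X), r) \in S'] \le e^\epsilon \Pr[g(\mathcal{M}(X'), r) \in S'] + \delta.
\end{equation*}
Integrating both sides against the distribution of $R$ (using independence of $R$ from $\mathcal{M}$) and applying Fubini preserves the inequality and closes the argument.

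The main obstacle, such as it is, is conceptual rather than technical: one must be careful that the post-processor uses no additional information about the input $X$ beyond what is contained in $\mathcal{M}(X)$, i.e., its randomness $R$ must be independent of $X$. Once this independence is stipulated, the rest is routine, and no properties of the Gaussian mechanism or the specific form of $\mathcal{M}$ are needed, only the defining inequality of $(\epsilon,\delta)$-DP applied to the pulled-back set.
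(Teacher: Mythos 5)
Your proof is correct and is the standard argument: pull back the target event through a deterministic post-processor to its preimage and apply the DP inequality of $\mathcal{M}$ there, then handle a randomized $\mathcal{F}$ by conditioning on its (input-independent) randomness and averaging. The paper itself offers no proof of this proposition --- it is imported verbatim as a known fact from the cited reference \cite{dwork2014algorithmic}, whose proof is essentially the one you give --- so there is nothing in the paper for your argument to diverge from; your added care about measurability of the preimage and independence of the post-processor's randomness from the input is appropriate and does not change the substance.
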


\subsection{$\epsilon$-R\'{e}nyi Differential Privacy}

The original DP has been investigated in several works to estimate the privacy level of a typical machine learning algorithm, such as Deep Learning \cite{abadi2016deep}, Federated Learning \cite{wei2020performance} and GAN \cite{xie2018differentially,8636556}. However, the original DP needs strong assumption and will damage its accuracy when using composition theorem, as pointed in \cite{mironov2017renyi}. Thus, we
elaborate a generalization of the notion of DP based on the concept of the R\'{e}nyi divergence, which can avert inaccuracy. The R\'{e}nyi divergence is classically defined as follows:
\begin{defn}
(R\'{e}nyi divergence). For two probability distributions $P$ and $Q$ defined over $\mathcal{R}$, the R\'{e}nyi divergence of order $\alpha > 1$ is
\begin{equation}
{D_\alpha }\left( {P||Q} \right) \buildrel \Delta \over = \frac{1}{{\alpha  - 1}}\log {E_{x \sim Q}}{\left( {\frac{{P\left( x \right)}}{{Q\left( x \right)}}} \right)^\alpha },
\end{equation}
where $P(x)$ denotes the density of $P$ at $x$.
\end{defn}
It motivates exploring a relaxation of DP based on the R\'{e}nyi divergence.
\begin{defn}
(($\alpha, \epsilon$)-RDP). A randomized mechanism $f: \mathcal{X} \mapsto \mathcal{R}$ is said to have $\epsilon$-R\'{e}nyi DP of order $\alpha$, or ($\alpha, \epsilon$)-RDP for short, if for any adjacent $X, X' \in \mathcal{X}$ it holds that
\begin{equation} \label{RDP}
{D_\alpha }\left( {f\left( X \right)||f\left( {X}' \right)} \right) \le \epsilon.
\end{equation}
\end{defn}
\begin{remark}
Similar to the definition of DP, a finite value for $\epsilon$-RDP implies that feasible outcomes of $f(X)$ for some $X\in \mathcal{X}$ are feasible, i.e., have a non-zero density, for all inputs from $\mathcal{X}$ except for a set of measure 0. Assuming that this is in cause, we let the event space be the support of the distribution.
\end{remark}

We then list some propositions that will be used in deriving the analytical results in the following.
\begin{prop} \label{comp}
Let $f: \mathcal{D} \mapsto \mathcal{R}_1$ be $(\alpha, \epsilon_1)$-RDP and $g: \mathcal{R}_1 \times \mathcal{D} \mapsto \mathcal{R}_2$ be $(\alpha, \epsilon_2)$-RDP, then the mechanism defined as $(X,Y)$, where $X\sim f(\mathcal{D})$ and $Y \sim g(X,\mathcal{D})$, satisfies $(\alpha,\epsilon_1+\epsilon_2)$-RDP.
\end{prop}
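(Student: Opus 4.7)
The plan is to unwrap the definition of RDP in terms of an expectation of a likelihood ratio raised to the $\alpha$-th power, factor the joint density of the composed mechanism into a marginal and a conditional, and then bound each factor separately using the RDP assumptions on $f$ and $g$.

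Concretely, fix two adjacent inputs $\mathcal{D}, \mathcal{D}' \in \mathcal{X}$ and let $P(x,y)$ and $Q(x,y)$ denote the joint densities of $(X,Y)$ under $\mathcal{D}$ and $\mathcal{D}'$ respectively. By the chain rule, $P(x,y) = P_1(x)\, P_2(y\mid x)$ where $P_1$ is the density of $f(\mathcal{D})$ and $P_2(\cdot\mid x)$ is the density of $g(x,\mathcal{D})$; similarly for $Q$. Starting from the definition of the R\'enyi divergence, I would write
\begin{equation}
E_{(x,y)\sim Q}\!\left[\left(\frac{P(x,y)}{Q(x,y)}\right)^{\!\alpha}\right]
= E_{x\sim Q_1}\!\left[\left(\frac{P_1(x)}{Q_1(x)}\right)^{\!\alpha} E_{y\sim Q_2(\cdot\mid x)}\!\left[\left(\frac{P_2(y\mid x)}{Q_2(y\mid x)}\right)^{\!\alpha}\right]\right],
\end{equation}
using the tower property of conditional expectation.

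Next, I would invoke the two RDP hypotheses. The inner expectation equals $\exp\bigl((\alpha-1)\, D_\alpha(P_2(\cdot\mid x) \,\|\, Q_2(\cdot\mid x))\bigr)$, and because $g$ is $(\alpha,\epsilon_2)$-RDP for every fixed first argument (in particular for $x$), this is bounded by $e^{(\alpha-1)\epsilon_2}$, uniformly in $x$. Pulling that constant outside and applying the $(\alpha,\epsilon_1)$-RDP of $f$ to the remaining expectation then yields
\begin{equation}
E_{(x,y)\sim Q}\!\left[\left(\frac{P(x,y)}{Q(x,y)}\right)^{\!\alpha}\right]
\le e^{(\alpha-1)\epsilon_2}\cdot e^{(\alpha-1)\epsilon_1}.
\end{equation}
Taking $\tfrac{1}{\alpha-1}\log$ on both sides recovers $D_\alpha(P\|Q) \le \epsilon_1+\epsilon_2$, which is exactly $(\alpha,\epsilon_1+\epsilon_2)$-RDP.

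The main obstacle is conceptual rather than computational: one must be careful to interpret the hypothesis on $g$ as saying that $g(x,\cdot)$ is $(\alpha,\epsilon_2)$-RDP for \emph{every} fixed $x$, so that the bound on the inner conditional divergence holds pointwise and can be factored out of the outer expectation. A secondary technical point is justifying the factorization of $(P/Q)^\alpha$ as $(P_1/Q_1)^\alpha (P_2/Q_2)^\alpha$, which requires the usual measure-theoretic caveat (noted already in the remark following the RDP definition) that the supports agree up to a null set so the conditional densities are well defined. Once these issues are handled, the remainder is a direct computation.
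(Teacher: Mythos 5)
The paper does not prove Proposition~\ref{comp} at all; it is imported as a known preliminary (the adaptive composition rule for R\'enyi DP from Mironov's work), so there is no in-paper argument to compare against. Your proof is correct and is the standard one: factoring the joint density by the chain rule, bounding the inner conditional moment by $e^{(\alpha-1)\epsilon_2}$ uniformly in $x$ via the pointwise RDP hypothesis on $g$, and then applying the RDP of $f$ to the outer expectation. You also correctly flag the one place where care is needed, namely that adaptivity is handled precisely because the bound on the conditional divergence holds for every fixed value of the first mechanism's output.
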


\begin{prop} \label{privacy}
(Probability preservation). Let $\alpha > 1$, $P$ and $Q$ be two distributions defined over $\mathcal{R}$ with identical support. $A \subset \mathcal{R}$ be an arbitrary event. Then
\begin{equation}
P\left( A \right) \le {\left\{ {\exp \left[ {{\mathcal{D}_\alpha }(P||Q)} \right] \cdot Q\left( A \right)} \right\}^{\left( {\alpha  - 1} \right)/\alpha }}.
\end{equation}
\end{prop}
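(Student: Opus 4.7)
The plan is to deduce this tail bound from a direct application of H\"older's inequality, exploiting the fact that the quantity $\exp[(\alpha-1)D_\alpha(P\|Q)]$ is exactly the $\alpha$-th moment $\int P(x)^\alpha Q(x)^{1-\alpha}\,dx$ of the likelihood ratio. First I would rewrite $P(A)$ as
\begin{equation*}
P(A) \;=\; \int_A \frac{P(x)}{Q(x)^{(\alpha-1)/\alpha}}\cdot Q(x)^{(\alpha-1)/\alpha}\,dx,
\end{equation*}
which is set up so that the two factors have a natural split between a ``ratio'' piece and a ``measure'' piece. The identical-support assumption is what allows me to divide by $Q(x)^{(\alpha-1)/\alpha}$ freely.

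Next I would apply H\"older's inequality to the integrand with conjugate exponents $\alpha$ and $\alpha/(\alpha-1)$ (these are conjugate since $1/\alpha + (\alpha-1)/\alpha = 1$, and $\alpha>1$ makes both finite). This yields
\begin{equation*}
P(A) \;\le\; \left(\int_A \frac{P(x)^\alpha}{Q(x)^{\alpha-1}}\,dx\right)^{\!1/\alpha}\!\left(\int_A Q(x)\,dx\right)^{\!(\alpha-1)/\alpha}.
\end{equation*}
The second factor is simply $Q(A)^{(\alpha-1)/\alpha}$. For the first factor, I would enlarge the domain of integration from $A$ to all of $\mathcal{R}$, which can only increase the integrand since it is nonnegative, and then recognize the result as $\exp[(\alpha-1)D_\alpha(P\|Q)]$ by the definition of R\'enyi divergence.

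Finally I would collect exponents: raising $\exp[(\alpha-1)D_\alpha(P\|Q)]$ to the power $1/\alpha$ gives $\exp[D_\alpha(P\|Q)]^{(\alpha-1)/\alpha}$, and combining this with the $Q(A)^{(\alpha-1)/\alpha}$ factor produces exactly the claimed bound $\{\exp[D_\alpha(P\|Q)]\cdot Q(A)\}^{(\alpha-1)/\alpha}$. I do not anticipate a serious obstacle; the only subtlety is choosing the right H\"older split so that the ratio term lines up with the definition of $D_\alpha$, and verifying that integrability on $A$ follows from the (assumed finite) moment over all of $\mathcal{R}$. The identical-support hypothesis handles any worry about dividing by zero.
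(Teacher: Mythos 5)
Your proof is correct, and it is the standard derivation of this probability-preservation bound (the paper itself states Proposition~3 without proof, importing it from the R\'enyi-DP literature, where exactly this H\"older argument is used). The decomposition $P(A)=\int_A P\,Q^{-(\alpha-1)/\alpha}\cdot Q^{(\alpha-1)/\alpha}$, the application of H\"older with conjugate exponents $\alpha$ and $\alpha/(\alpha-1)$, the enlargement of the first integral to all of $\mathcal{R}$ to recover $\exp[(\alpha-1)D_\alpha(P\|Q)]$, and the final bookkeeping of exponents are all valid; the identical-support hypothesis indeed disposes of the division-by-zero concern, and when the R\'enyi divergence is infinite the claimed inequality holds trivially.
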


\subsection{Generative Adversarial Networks}
GAN is a recently developed generative model to produce synthetic images or texts after being trained \cite{goodfellow2014generative}. The learning process in the model is based on one generator (g) and one discriminator (d) neural networks playing in the following zero-sum minimax (i.e., adversarial) game:
\begin{equation}
\begin{split}
&\mathop {\min }\limits_G \mathop {\max }\limits_D V(g,d) = \\
&\mathbb{E}{\left[ {\log (d(x)} \right]_{{\rm{x}} \sim {{{p}}_{data}}(x)}} + \mathbb{E}{\left[ {\log \left( {1{\rm{ - }}d\left( {g\left( {\rm{z}} \right)} \right)} \right)} \right]_{z \sim p(z)}},
\end{split}
\end{equation}
where $p(z)$ is a prior distribution of latent vector $z$, $g(\cdot)$ is a generator function, and $d(\cdot)$ is a discriminator function whose output spans $[0,1]$. $d(x) = 0$ (resp. $d(x)=1$) indicates that the discriminator $d$ classifies a sample $x$ as generated (resp. real).

In Fig.~\ref{GAN}, we show a typical structure of GAN. $g$ and $d$ can be any form of neural networks. The discriminator  attempts to maximize the objective, whereas the generator attempts to maximize the objective. In other words, the discriminator attempts to distinguish between real and generated samples, while the generator  attempts to generate real-looking fake samples that the discriminator  cannot distinguish from real samples. According to the feedback in the interaction process, the generator will updates its network to enhance the quality of the generated samples. Adversaries exist in this environment and may attack the interaction process, and obtain private information from clients. In order to avoid this privacy leakage, in the next section we propose a method to improve the privacy performance of GAN.

\begin{figure}
\centering
  \includegraphics[width=0.4\textwidth]{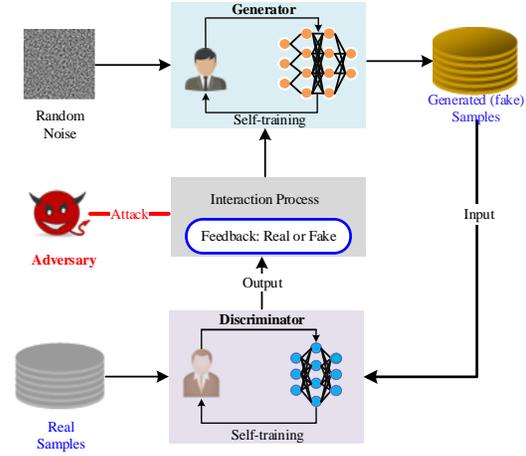}
  \caption{The structure of GAN} \label{GAN}
\end{figure}

\section{Design of RDP-GAN}
In this section, we elaborate the design of RDP-GAN, a R\'{e}nyi differentially private adversarial network to mitigate information leakage, and maintain desirable utility in the generated data.

\subsection{RDP-GAN Framework}
In most real-world problems, the ture data distribution $p(x)$ is unknown and needs to be estimated empirically. Since we are primarily interested in data synthesis, and we will use GANs as the mechanism to estimate $p(x)$ and draw samples from it. If trained properly, a RDP-GAN will mitigate inference attack during the data analysis.

The framework of RDP-GAN is structured as follows. Sensitive data $X$ is first fed into a discriminator  with a privacy-preserving layer. This discriminator is used to train a differentially private generator to produce a private artificial dataset $\tilde{X}$. Different from the work on differentially private deep learning (e.g., \cite{abadi2016deep,xie2018differentially,8636556,yu2019differentially}), the proposed algorithm achieves DP by injecting random noise on the value of the loss function in the interaction procedure. In more details, the rational behind our design is as follows:
\begin{itemize}
  \item Different from existing works, which add noise on the gradients, the proposed algorithm focuses on the design on the interaction between the generator and discriminator. This is because that from the aspect of an adversary, the interaction process is more vulnerable to attack than the network architecture if the generator and discriminator are not assembled. Moreover, it is not trivial to acknowledge information from the inside of the discriminator or generator.
  \item Adding noise on the value of the loss function is a direct method compared to the method in \cite{xie2018differentially,8636556,yu2019differentially}, which will bring explicit privacy protection. This is because that the loss values, not the parameters, are exchanged between generator and discriminator, and the perturbation on the loss value will directly influence the updates of the generator. In addition, the privacy estimation is not that accuracy as the changes of the privacy level from parameters to loss values are ignored in the previous works.
  \item Adding noise on the value of the loss function does not need an extra norm clipping function as we naturally use a bounded activation function as the last layer of neuron network. Manually bounding the loss function is an non-trivial task as its value is sensitive and needs adjustment based on empirical results.
\end{itemize}

Despite the fact that the generator does not have access to the real data $X$ in the training process, one cannot guarantee DP because of the information passed through with the loss value from the discriminator. A simple high level example will illustrate such breach of privacy. Let the dataset $X$, $X'$ contain some small real numbers. The only difference between these two datasets is the number $x' \in X'$, which happens to be extremely large. Since the impact of $x'$ is large enough to influence the performance of the discriminator, specifically on its loss value, it will lead to a large difference whether $x$ is used to train or not. In this case, this difference on the loss value will be propagated to the generator that will break privacy.

\subsection{The Implementation of RDP-GAN}
Our method focuses on preserving the privacy during the training procedure, and we add noise on the loss value of the discriminator as follows:
\begin{equation}
\mathbb{E}{\left[ {\log (d(x)} \right]{{}  {}}}=F_d(W;X)\leftarrow \sum ^ {m}_{i=1}F_d({w_i};x_i)+\mathcal{N}(0,\sigma^2),
\end{equation}
where we use $F_d(;)$ to represent the value of the loss function of all data samples, and $m$ denotes the sample size of the discriminator, respectively.

We first explain this perturbation on the loss value will not vanish during back propagation. Considering a $L$ layers network, the perturbed loss value of the last layer (the $L$-th layer) can be represented by
\begin{equation} \label{u1}
F_d^L=\nabla_A C\odot A'(z^L)+\mathcal{N}(0,\sigma^2),
\end{equation}
where $A$ denotes the activation function of the $L$-th layer, $C$ denotes the correct label, $\odot$ expresses the XNOR function that outputs a logical one or true only if two inputs are the same, and $z^L$ is the input of the $L$-th layer, respectively \cite{ruder2016overview}. During back propagation, this perturbation is involved as follows:
\begin{equation} \label{u2}
F_d^l=[(w^{l+1})^TF_d^{l+1}]\odot A'(z^l),
\end{equation}
where $l=L-1,L-2,...,2,1$. Thus, if adding noise on the value of the discriminator can achieve DP, this perturbation will be elaborated during the back propagation when updating the discriminator, and ensures that the discriminator is DP.
Moreover, the loss value (parameters) of generator can also guarantee DP because of post-processing property in \cite{dwork2014algorithmic} which states that any mapping (operation) after a differentially private output will not invade the privacy. Here the mapping is in fact the computation of parameters of generator.

The RDP-GAN procedure is summarized in Algorithm~\ref{gan1}.

  \begin{algorithm}[htb]
  \caption{ Framework of the Proposed RDP-GAN.}
  \label{gan1}
  \begin{algorithmic}[1]
    \Require
      real samples: $\{x_1,x_2, \cdots\}\sim p(x)$; prior samples: $\{z_1,z_2, \cdots\}\sim p(z)$; sampling rate: $q$; number of discriminator iterations per generator iteration: $n_d$; number of generator iteration: $n_g$; noise scale: $\sigma$;
    \Ensure
      differential private generator.
    \State Initialize discriminator parameters $d_0$, generator parameters $g_0$.

    \State \textbf{For} $t_1 =1, . . ., n_g$ do

    \State ~~\textbf{For} $t_2 = 1, . . ., n_d$ do

    \State ~~~~Sample $\{x_1,x_2, \cdots x_m\}\sim p(x)$ a batch of $m$ real data points.

    \State ~~~~Sample $\{z_1,z_2, \cdots z_m\}\sim p(z)$ a batch of $m$ prior samples.

    \State ~~~~Train the discriminator $d$.

    \State ~~~~For each data sample $i$, add a random noise on the value of the loss function as $F_d(W;X)\leftarrow \sum ^ {m}_{i=1}F_d^{t_2}({w_i};x_i)+\mathcal{N}(0,\sigma^2)$.

     \State ~~~~Update the discriminator $d$ according to Eq.~(\ref{u1}) and (\ref{u2}).


    \State ~~\textbf{End for}

    \State ~~Update the generator $g$.

    \State \textbf{End for}

    \State Return differentially private generator $g$.
  \end{algorithmic}
\end{algorithm}
\subsection{Privacy Analysis of RDP-GAN}
Because the proposed algorithm consists of multiple iterations and in order to analyze the total privacy loss, we first prove that adding noise on the value of the loss function in the discriminator ensures the R\'{e}nyi differential privacy, and then show that the generator can satisfy DP.
\subsubsection{Sensitivity on the loss function}
According to the definition of DP, we consider two adjacent database $X$ and $X'$ with same size, and only differ by one sample.

Consequently, for the discrimination in one iteration, the training process can be written as:
\begin{equation}
\mathcal{L}_d(X) = \mathop {min}\limits_W F_d(W;X),
\end{equation}
where $\mathcal{L}_d(X)$ denotes the training process. Therefore, the sensitivity on the loss function can be expressed as \begin{equation}
\Delta S = \mathop {\max }\limits_{X,X' \in \mathcal{X}} ||F_d(W;X) - F_d(W';X')||.
\end{equation}

To make the loss function bounded, we can choose a bounded activation as the last network layer, thus the value ${W_i\left( {{X_i}} \right)}$ of the $i$-th sample is bounded. So the value of the loss function $F(;)$ has a bounded input with unchanged label, and its output is naturally bounded according to the forward processing property. In addition, we denote the bound of loss function by $C$\footnote{Note here $C$ can be different for different datasets and loss functions. In our experimental results, we will conduct experiments to obtain $C$ empirically.}. Thus, following the same logic of \cite{wei2020performance}, the sensitivity of loss function can be expressed as
\begin{equation}\label{eq:sen}
\Delta S=C/|X|.
\end{equation}

From Eq.~(\ref{eq:sen}), we can observe that when the batch size increases, the sensitivity should decrease accordingly. The intuition is that the maximum difference on single data sample can make is set to $C$ and such a maximum difference is scaled down by a factor of $|X|$ since every data sample contributed equally to the resulting model. For example, if a model is trained by patient records with a certain disease, acquiring that an individual's record is among them directly affects the privacy. If we increase the size of the dataset, it will be more difficult to determine whether this record is part of the training data or not.
\subsubsection{Privacy analysis on the discriminator}
With the sensitivity, we can design the Gaussian mechanism $\mathcal{N}(0,\sigma^2)$ with $(\alpha,\epsilon)$-RDP requirement in terms of the sampling rate $q$ under $n_d$ iterations. In Theorem~\ref{22}, we present the privacy level of the output of discriminator as follows.

\begin{theorem} \label{22}
Given the sampling rate $q$ and the sensitivity $\Delta S$, adding random noise $\mathcal{N}(0,\sigma^2)$ on the loss function, the output of discriminator for each iteration can satisfy $(\alpha,\epsilon)$-RDP, and $\epsilon \leq \frac{q\alpha^2\Delta S^2}{2(\alpha-1)\sigma^2}$.
\end{theorem}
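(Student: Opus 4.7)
The plan is to combine two standard ingredients: (i) the Renyi-divergence bound for the Gaussian mechanism, and (ii) privacy amplification by subsampling for RDP. I would first set subsampling aside and analyze the per-iteration mechanism $F_d(W;X) + \mathcal{N}(0,\sigma^2)$ on a fixed minibatch. Using the closed-form expression for the Renyi divergence between two univariate Gaussians of common variance, $D_\alpha(\mathcal{N}(\mu,\sigma^2)\|\mathcal{N}(\mu',\sigma^2)) = \alpha(\mu-\mu')^2/(2\sigma^2)$, and maximizing the shift over adjacent $X,X'$ via Eq.~(\ref{eq:sen}), one obtains an intermediate $(\alpha,\alpha \Delta S^2/(2\sigma^2))$-RDP guarantee for the noisy loss value.

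I would then fold in the subsampling step. Since each minibatch is drawn from the full database with per-record sampling probability $q$, the RDP subsampling-amplification lemma applies: an $(\alpha,\epsilon_0)$-RDP mechanism composed with subsampling at rate $q$ is $(\alpha,\epsilon)$-RDP with $\epsilon$ scaling like $q\alpha/(\alpha-1)$ times $\epsilon_0$, provided $\epsilon_0$ is in the regime where the leading-order term of the log-moment expansion dominates. Substituting the Gaussian intermediate bound into the amplification inequality yields exactly $\epsilon \le q\alpha^2 \Delta S^2/\bigl(2(\alpha-1)\sigma^2\bigr)$; the extra factor $\alpha/(\alpha-1)$ is the overhead of converting log-moments back to a Renyi-divergence bound after the mixture step induced by subsampling.

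The main obstacle I anticipate is justifying the subsampling amplification rather than merely citing it. The cleanest route is to write the subsampled output as a mixture $q\cdot f(X\cup\{x\}) + (1-q)\cdot f(X)$ and its neighbour as $q\cdot f(X) + (1-q)\cdot f(X)$, then bound the privacy-loss random variable's moment generating function $\log \mathbb{E}[e^{(\alpha-1)L}]$ by expanding the mixture ratio, discarding higher-order $q$ terms, and dividing by $\alpha-1$; Proposition~\ref{privacy} together with joint quasi-convexity of the Renyi divergence supplies the necessary inequality. A secondary subtlety is to confirm that the intermediate mechanism really has $\ell_2$-sensitivity $\Delta S$ in the sense required by the Gaussian computation: because $F_d$ is scalar-valued and the bounded-activation argument preceding the theorem caps each record's contribution by $C$, the bound in Eq.~(\ref{eq:sen}) transfers directly, so no additional clipping is required. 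Finally, I would note that the result is stated per iteration; the cumulative cost over $n_d$ discriminator steps and $n_g$ generator updates will then follow from Proposition~\ref{comp}, with the generator inheriting privacy via Proposition~\ref{pp}.
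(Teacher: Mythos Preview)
Your proposal is correct and lands on the same core computation, but you package it more modularly than the paper does. The paper does not split the argument into ``Gaussian RDP'' followed by a separately-stated amplification lemma; it writes the subsampled noised loss on $X$ directly as the mixture $(1-q)u_0 + qu_1$ with $u_0=\mathcal{N}(0,\sigma^2)$, $u_1=\mathcal{N}(\Delta S,\sigma^2)$, and bounds $D_\alpha\bigl((1-q)u_0+qu_1\,\|\,u_0\bigr)$ in one pass via a binomial expansion of $[(1-q)+q\,u_1/u_0]^\alpha$ followed by two Taylor steps ($\log(1+x)\le x$ and $e^x-1\le x+O(x^2)$); it then checks the reverse divergence and shows it is dominated. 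Your two-step decomposition is cleaner conceptually and connects to the general subsampled-RDP literature, while the paper's monolithic calculation is self-contained and makes the constants explicit without invoking an external amplification result whose exact constant must then be matched. One small correction: Proposition~\ref{privacy} (probability preservation) is not the tool for the mixture step---the paper uses it only in Lemma~\ref{33} to pass from RDP to an $(\epsilon,\delta)$-type tail bound; the mixture inequality here is purely algebraic, and quasi-convexity alone would suffice if you want a structural justification.
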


\begin{proof}
Please find the proof in the Appendix A.
\end{proof}

We then provide the following lemma, which will be used to prove the output of discriminator after $n_d$ iterations can satisfy $(\epsilon_d,\delta)$-DP. For simplicity, we rewrite the loss function $F_d(W;X)$ as $F_d(X)$ in the following.
\begin{lemma} \label{33}
Let $f: \mathcal{D} \mapsto \mathcal{R}$ be an adaptive composition of $n$ mechanisms all satisfying $(\alpha,\epsilon)$-RDP. Let $X$ and $X'$ be two adjacent inputs. Then for any subset $S\subset \mathcal{R}$:
\begin{equation} \label{11111}
\begin{split}
&\Pr \left[ {F_d(X) \in S} \right] \le \\
&\exp \left\{ {2\epsilon \sqrt {n\log /\Pr \left[ {F_d(X') \in S} \right]} } \right\} \cdot \Pr \left[ {F_d\left( {X'} \right) \in S} \right].
\end{split}
\end{equation}
\end{lemma}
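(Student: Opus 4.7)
The plan is to combine the RDP composition statement (Proposition~\ref{comp}) with the probability-preservation inequality (Proposition~\ref{privacy}) and then optimize over the R\'enyi order $\alpha$ so that the right-hand side takes the square-root form claimed in the lemma. Throughout, I would write $p = \Pr[F_d(X)\in S]$ and $q = \Pr[F_d(X')\in S]$ to keep the algebra compact.

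First I would invoke Proposition~\ref{comp} inductively: since each of the $n$ adaptively composed mechanisms is $(\alpha,\epsilon)$-RDP, the composed mechanism $f$ satisfies $(\alpha,n\epsilon)$-RDP, i.e., $D_\alpha\bigl(f(X)\,\|\,f(X')\bigr) \le n\epsilon$ for every pair of adjacent inputs. Next, I would apply Proposition~\ref{privacy} with $P = f(X)$, $Q = f(X')$ and $A = S$; substituting the divergence bound from composition yields
\begin{equation*}
p \;\le\; \bigl(\exp(n\epsilon)\cdot q\bigr)^{(\alpha-1)/\alpha}.
\end{equation*}
Taking logarithms and rearranging gives a bound on the per-event privacy loss in terms of the free parameter $\alpha$:
\begin{equation*}
\log(p/q) \;\le\; \tfrac{\alpha-1}{\alpha}\,n\epsilon \;+\; \tfrac{1}{\alpha}\log(1/q).
\end{equation*}

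The final step is to optimize over $\alpha>1$. Using the fact from Theorem~\ref{22} that the effective per-iteration RDP parameter scales linearly in $\alpha$ (so that the first term behaves like a constant times $\alpha\epsilon$ while the second behaves like $\log(1/q)/\alpha$), an AM--GM balance picks the optimal $\alpha^{\ast}$ that equates the two $\alpha$-dependent contributions; substituting $\alpha^{\ast}$ produces a bound of the form $2\epsilon\sqrt{n\log(1/q)}$ on $\log(p/q)$. Exponentiating recovers exactly \eqref{11111}.

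The main obstacle is the implicit $\alpha$-dependence of $\epsilon$ and the resulting bookkeeping: the lemma uses a single symbol $\epsilon$ both for the per-step RDP parameter and for the scalar multiplying the square root in the conclusion, so to obtain a genuine $\sqrt{n}$ (rather than a linear-in-$n$) bound one must exploit the Gaussian-mechanism scaling $\epsilon_\alpha\propto\alpha$ from Theorem~\ref{22} during the optimization. A secondary nuisance is the $(\alpha-1)/\alpha$ factor versus $\alpha/(\alpha-1)$; I would absorb this by working in the large-$\alpha$ regime where $(\alpha-1)/\alpha\to 1$, and keep track of the lower-order correction separately so the constant in the square-root exponent is explicit.
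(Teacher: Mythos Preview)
Your plan is essentially the paper's proof: compose via Proposition~\ref{comp}, feed the resulting divergence bound into Proposition~\ref{privacy}, and then optimize the R\'enyi order. Two small points where the paper differs from your sketch: (i) rather than appealing to a ``large-$\alpha$'' regime, the paper plugs in the explicit choice $\alpha=\sqrt{\log(1/q)}/(\epsilon\sqrt{n})$ and tracks the $(\alpha-1)$ factor directly, which yields the clean constant $2$ without any asymptotic approximation; (ii) because the optimization is constrained to $\alpha>1$, the paper splits into the case $\log(1/q)\ge \epsilon^2 n$ (where the chosen $\alpha$ is admissible) and the complementary case, which is dispatched trivially since the right-hand side of \eqref{11111} then exceeds $1$. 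You have already put your finger on the real subtlety: the paper's first line in the proof is $D_\alpha\le 2\alpha n\epsilon^2$, which does \emph{not} follow from Proposition~\ref{comp} applied to a generic $(\alpha,\epsilon)$-RDP hypothesis but rather from the Gaussian/zCDP-type scaling implicit in Theorem~\ref{22}; your proposal handles this more transparently than the paper does.
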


\begin{proof}
Please find the proof in the Appendix B.
\end{proof}
\begin{remark}
Compared with the standard composition theorem, i.e., the Proposition~\ref{comp}, Lemma~\ref{33} can achieve a tighter privacy estimation on multiple mechanisms.
\end{remark}
\begin{theorem} \label{44}
Given the number of discriminator iterations $n_d$, the sampling rate $q$ and the sensitivity $\Delta S$, adding random noise $\mathcal{N}(0,\sigma^2)$ on the value of the loss function of discriminator, the output guarantees $(\epsilon_{{\emph{d}}},\delta)$-DP under a composition of $n_d$ RDP mechanisms, when $\log \left( {1/\delta } \right) \ge {\epsilon ^2}n_d$. The expression of $\epsilon_{\emph{d}}$ can be derived as
\begin{equation}
\epsilon_{\emph{d}} \buildrel \Delta \over = 4\epsilon \sqrt {2n_d\log \left( {1/\delta } \right)},
\end{equation}
where $\epsilon = \frac{q\alpha^2\Delta S^2}{2(\alpha-1)\sigma^2}$.
\end{theorem}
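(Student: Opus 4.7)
The plan is to chain the per-iteration R\'{e}nyi guarantee of Theorem~\ref{22} with the composition lemma (Lemma~\ref{33}), and then convert the resulting R\'{e}nyi-style probability inequality into a standard $(\epsilon_d,\delta)$-DP statement.

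First I would invoke Theorem~\ref{22} to conclude that every single iteration of the noisy-loss discriminator update is $(\alpha,\epsilon)$-RDP with $\epsilon \le q\alpha^2\Delta S^2/(2(\alpha-1)\sigma^2)$. Since the $n_d$ discriminator updates form an adaptive composition (each noisy loss is a randomized function of the previous updates), the hypotheses of Lemma~\ref{33} are met with $n=n_d$, yielding, for every measurable $S\subset\mathcal{R}$,
\begin{equation*}
\Pr[F_d(X)\in S]\le \exp\bigl(2\epsilon\sqrt{n_d\log(1/\Pr[F_d(X')\in S])}\bigr)\cdot\Pr[F_d(X')\in S].
\end{equation*}

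Next I would convert this bound into the $(\epsilon_d,\delta)$-DP form $\Pr[F_d(X)\in S]\le e^{\epsilon_d}\Pr[F_d(X')\in S]+\delta$ by a case split on $p := \Pr[F_d(X')\in S]$. In the ``large'' regime $p\ge\delta$ we have $\log(1/p)\le\log(1/\delta)$, so the exponent is at most $2\epsilon\sqrt{n_d\log(1/\delta)}$; symmetrizing to both directions $X\to X'$ and $X'\to X$ and absorbing the slack from the two-sided inequality yields the $e^{\epsilon_d}p$ contribution with the claimed prefactor. In the ``small'' regime $p<\delta$, I would fold the whole bound into the additive $\delta$ slack, exploiting that $p$ is small relative to the growth of $\exp(2\epsilon\sqrt{n_d\log(1/p)})$; this is precisely where the hypothesis $\log(1/\delta)\ge \epsilon^2 n_d$ enters.

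The main obstacle is exactly this second case: as $p\to 0$ the factor $\exp(2\epsilon\sqrt{n_d\log(1/p)})$ diverges, so its contribution cannot be absorbed trivially. The assumption $\log(1/\delta)\ge\epsilon^2 n_d$ is tailored so that $\epsilon\sqrt{n_d\log(1/p)}$ remains small compared to $\log(1/p)$ itself throughout this region, which forces $p\exp(2\epsilon\sqrt{n_d\log(1/p)})$ to stay below $\delta$. Once the two regimes are glued together, the constants in $\epsilon_d=4\epsilon\sqrt{2n_d\log(1/\delta)}$ emerge from tracking the symmetrization factor of $2$ and the $\sqrt{2}$ loss incurred when reconciling the exponent at the boundary $p=\delta$.
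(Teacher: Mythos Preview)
Your overall strategy—invoke Theorem~\ref{22}, feed the $n_d$ iterations into Lemma~\ref{33}, and then case-split on $Q:=\Pr[F_d(X')\in S]$ to extract an $(\epsilon_d,\delta)$ guarantee—is exactly the route the paper takes. The gap is in the choice of threshold for the case split and, as a consequence, in your account of where the constant $4\sqrt{2}$ comes from.

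With your threshold $Q\ge\delta$ versus $Q<\delta$, the ``small'' regime does not close. Under the hypothesis $\log(1/\delta)\ge \epsilon^2 n_d$ one has $2\epsilon\sqrt{n_d\log(1/Q)}\le 2\sqrt{\log(1/\delta)\log(1/Q)}$, and the inequality $\exp\bigl(2\sqrt{\log(1/\delta)\log(1/Q)}\bigr)\cdot Q\le\delta$ is equivalent to $\log(1/Q)\ge(3+2\sqrt{2})\log(1/\delta)$, which is strictly stronger than $Q<\delta$. So merely assuming $Q<\delta$ leaves an uncontrolled range $\delta^{3+2\sqrt{2}}<Q<\delta$ where the additive $\delta$ slack cannot absorb the bound. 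The paper fixes this by splitting at $\log(1/Q)=8\log(1/\delta)$ (i.e.\ $Q=\delta^8$): in Case~I ($\log(1/Q)<8\log(1/\delta)$) the exponent is at most $2\epsilon\sqrt{8n_d\log(1/\delta)}=4\epsilon\sqrt{2n_d\log(1/\delta)}=\epsilon_d$, and in Case~II ($\log(1/Q)\ge 8\log(1/\delta)$) one gets $P\le Q^{1-1/\sqrt{2}}\le Q^{1/8}\le\delta$. Thus the factor $4\sqrt{2}$ is produced entirely by the threshold $8$ in Case~I; it has nothing to do with a ``symmetrization to both directions'' or a boundary-matching $\sqrt{2}$ loss. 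No symmetrization step appears in the paper's argument, nor is one needed: the inequality from Lemma~\ref{33} is applied to an arbitrary adjacent pair $(X,X')$, which already covers both orderings.
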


\begin{proof}
Please find the proof in the Appendix C.
\end{proof}
\subsubsection{Privacy analysis of the generator}
We next provide the analytical privacy results on the generator according to the post processing property.
\begin{theorem} \label{11}
The output of generator in the proposed algorithm guarantees $(\epsilon_{{\emph{g}}},\delta)$-DP, where
\begin{equation}
\epsilon_{\emph{g}}=4 \frac{q\alpha^2\Delta S^2}{2(\alpha-1)\sigma^2} \sqrt {2n_d\log \left( {1/\delta } \right)}
\end{equation}
\end{theorem}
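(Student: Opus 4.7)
The plan is to obtain Theorem~\ref{11} as an immediate corollary of Theorem~\ref{44} combined with the post-processing property stated in Proposition~\ref{pp}. The key observation is that the generator in Algorithm~\ref{gan1} never touches the raw sensitive dataset $X$: it only receives the noisy loss values (and the resulting parameter updates) produced by the discriminator. Thus every quantity that the generator depends on is a (possibly randomized) function of the discriminator's output, which is exactly the setting in which post-processing applies.

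Concretely, I would proceed in three short steps. First, I would invoke Theorem~\ref{44} to conclude that after $n_d$ inner iterations, the discriminator mechanism $\mathcal{M}_d$ that maps the dataset $X$ to the sequence of perturbed loss values $F_d(W;X)+\mathcal{N}(0,\sigma^2)$ satisfies $(\epsilon_d,\delta)$-DP with
\begin{equation}
\epsilon_d \;=\; 4\epsilon\sqrt{2 n_d \log(1/\delta)}, \qquad \epsilon \;=\; \frac{q\alpha^{2}\Delta S^{2}}{2(\alpha-1)\sigma^{2}}.
\end{equation}
Second, I would observe that the generator update in lines~10--11 of Algorithm~\ref{gan1} can be written as a deterministic (or at worst data-independent randomized) mapping $\mathcal{F}$ from the discriminator's noisy outputs to the generator parameters: the generator samples its own prior $z$, back-propagates through its own network using $F_d^{L}$ from Eq.~(\ref{u1})--(\ref{u2}), and performs its gradient step. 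None of these operations re-access $X$, so the composition producing the final generator is exactly $\mathcal{F}\circ\mathcal{M}_d$.

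Third, applying Proposition~\ref{pp} to $\mathcal{F}\circ\mathcal{M}_d$ yields that the generator output inherits the same $(\epsilon_d,\delta)$-DP guarantee. Renaming $\epsilon_d$ as $\epsilon_g$ and substituting in the expression for $\epsilon$ gives exactly the claimed
\begin{equation}
\epsilon_g \;=\; 4\,\frac{q\alpha^{2}\Delta S^{2}}{2(\alpha-1)\sigma^{2}}\sqrt{2 n_d \log(1/\delta)}.
\end{equation}

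There is essentially no hard step here; the only subtlety worth flagging is the justification that the generator is genuinely a post-processing of the discriminator's DP output. This is where one must be careful to point out that the loss values exchanged in the interaction, rather than any direct access to $X$, are the sole channel through which sensitive information reaches the generator, and that the perturbation on these values propagates through back-propagation as shown in Eq.~(\ref{u1})--(\ref{u2}). Once this is explicitly noted, the theorem reduces to a one-line invocation of Proposition~\ref{pp}.
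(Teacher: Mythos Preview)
Your proposal is correct and follows exactly the same approach as the paper: the paper's own proof is the single sentence ``The privacy mechanism guarantees a direct consequence from followed Proposition~\ref{pp} by the post processing property of DP,'' so your argument via Theorem~\ref{44} plus Proposition~\ref{pp} is precisely what is intended, only spelled out in more detail.
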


\begin{proof}
The privacy mechanism guarantees a direct consequence from followed Proposition~\ref{pp} by the post processing property of DP.
\end{proof}

\subsubsection{Privacy analysis of the total training process}
Finally, we come to our final results that the proposed algorithm after $n_g$ iterations satisfies the DP guarantee.
\begin{theorem} \label{total}
The proposed algorithm satisfies $(\epsilon_{total},\delta)$-DP, where $\epsilon_{total}=n_g\epsilon_{{\emph{g}}}$, and $n_g$ denotes the total iterations of generator in this algorithm.
\end{theorem}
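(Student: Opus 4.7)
The plan is to view the entire training run of Algorithm \ref{gan1} as a sequential composition of $n_g$ outer iterations, each of which releases one updated generator, and to piggyback on Theorem \ref{11} which already certifies the per-outer-iteration guarantee. Thus the only remaining work is a composition argument across the outer loop, combined with the post-processing principle.

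First I would formalise each outer iteration $t\in\{1,\dots,n_g\}$ as a randomised mechanism $\mathcal{M}_t$ that takes the sensitive dataset $X$ (together with the auxiliary state $(g_{t-1},d_{t-1})$ carried in from the previous round) and outputs the updated pair $(g_t,d_t)$. Inside $\mathcal{M}_t$, the data $X$ is touched only through the noised loss $\sum_{i=1}^{m}F_d(w_i;x_i)+\mathcal{N}(0,\sigma^2)$ used in its $n_d$ inner discriminator steps; by Theorem \ref{44} the discriminator output is $(\epsilon_g,\delta)$-DP with respect to $X$, and the generator update is a deterministic post-processing of that output, so Proposition \ref{pp} together with Theorem \ref{11} yields that $\mathcal{M}_t$ itself is $(\epsilon_g,\delta)$-DP with respect to $X$.

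Next I would invoke the basic sequential (adaptive) composition theorem of DP: an adaptive composition of $n_g$ mechanisms that are each $(\epsilon_g,\delta)$-DP is $(n_g\epsilon_g,\,n_g\delta)$-DP. Since the chain $(g_1,d_1)\to(g_2,d_2)\to\cdots\to(g_{n_g},d_{n_g})$ is exactly such an adaptive composition with data $X$, the joint release of the entire chain satisfies $(n_g\epsilon_g,\,n_g\delta)$-DP, which after absorbing/rescaling the $\delta$-slack (as is implicit in the statement's use of $\delta$) gives $\epsilon_{\mathrm{total}}=n_g\epsilon_g$. A final application of post-processing (Proposition \ref{pp}) to extract only $g_{n_g}$ from the joint output preserves the guarantee and yields the claimed bound.

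The one subtle point — the main obstacle — is verifying that Theorem \ref{11}, which is phrased for a single outer iteration in isolation, applies uniformly in the auxiliary state $(g_{t-1},d_{t-1})$ so that the adaptive composition theorem is legitimately applicable. This reduces to observing that the three quantities driving the per-iteration bound, namely the sampling rate $q$, the noise scale $\sigma$, and the sensitivity $\Delta S=C/|X|$ (which depends only on the bounded loss and on the fixed dataset size), do not depend on the previous generator or discriminator parameters. Once this uniformity is noted, the composition step is routine and the theorem follows.
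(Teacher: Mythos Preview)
Your proposal is correct and matches the paper's approach: the paper offers no formal proof of this theorem, only the remark immediately following it explaining that, because the condition $\log(1/\delta)\ge\epsilon^2 n$ of Theorem~\ref{44} fails for large $n_g$, the privacy budgets of each generator iteration are ``directly summed'' --- i.e., basic sequential composition --- which is exactly what you formalise. Your observation that basic composition actually yields $(n_g\epsilon_g,\,n_g\delta)$-DP rather than $(n_g\epsilon_g,\,\delta)$-DP is apt (the paper is loose on this point), and your discussion of uniformity in the auxiliary state for adaptive composition is more careful than anything the paper provides.
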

\begin{remark}
For the calculation of the total privacy budget, it cannot directly using the composition property (Lemma~\ref{33}) as there is a strict condition in Theorem~\ref{44}, i.e., $\log \left( {1/\delta } \right) \ge {\epsilon ^2}n_d$. For typical DP setting, we can choose $\delta=10^{-5}$ and $\epsilon=1$, then we can obtain $n_g\leq 11.5$. Thus this condition cannot be satisfied in GAN scenarios as we always set a large number of iterations, usually $n_g>1000$. So we directly sum all the privacy budgets of every generator iteration in Theorem~\ref{total}.
\end{remark}

\section{Adaptive Noise Tuning Algorithm}
Although the loss value is bounded naturally, adding random noise may reduce the learning performance. In Algorithm~\ref{gan1}, the privacy budget allocated to one iteration is evenly distributed, which implies a same noise scale of Gaussian mechanism in every generator iteration. In general, the final model performance, such as convergence, accuracy, etc., is largely dependent on the amount of noise added over the training process \cite{yu2019differentially}. Thus, in order to obtain a better learning performance, we can optimize the allocation of privacy budget and design the amount of noise added in each iteration. The details are described as follows.
\subsection{Adaptive Noise Tuning Algorithm}
Our adaptive noise tuning algorithm follows the idea that, as the training continues, it is expected to have less noise on the loss function, which allows the model converge faster and obtain better results. Similar ideas have been applied by adjusting the learning rate in common practice \cite{chollet2017xception}. Therefore, we propose an algorithm that can adaptively tune the noise scale over the training iterations, and effectively improve the model accuracy in turn.

Our approach for adaptive tuning the noise scale is to monitor the training accuracy and reduce the noise scale when the accuracy stops improving. To do this, we first add an extra testing process after each generator finishes training in each iteration. Every time when the improvement of the testing accuracy is less than a predefined threshold $\tau$, the noise scale is reduced by a factor of $k$ until the total privacy budget runs out. Although this will lead to recalculating privacy cost, we can show the improvement on the performance is convincing despite more energy cost in the experimental results.

In our approach, with a validation dataset, the testing accuracy is checked periodically during the training process in the generator to determine whether the noise scale needs to be reduced for subsequent iterations. Let $\sigma_t$ be the noise scale in the $t$th iteration obtained by Theorem~\ref{total} when the total privacy budget and total iterations are given, and $S_t$ be the corresponding testing accuracy. The noise scale for the subsequent iterations is adjusted based on the accuracy difference between the current and previous iteration.
\begin{equation} \label{test}
\sigma_{t+1}=\left\{
  \begin{array}{ll}
    \sigma_t, & \hbox{if $S_{t+1}-S_{t}\geq \tau$;} \\
    k\sigma_t, & \hbox{else,}
  \end{array}
\right.
\end{equation}
where $\tau$ denotes the threshold for the testing accuracy.

Initially we set $0<k\leq1$ and $S_0=0$.
Then the updated noise scale $\sigma_{t+1}$ will be applied to the next iteration.

In addition, we find that the testing accuracy may not increase monotonically as the training progresses, and its fluctuation may cause unnecessary reduction of noise scale and thus wasting  the privacy budget. This motivates us to use the average of testing accuracy to improve the algorithm: at the current testing iteration $t$, we define an average testing accuracy $\overline{S}_t$ over the previous iterations as follows:
\begin{equation}
{\overline S_t} = \frac{1}{t}\sum\limits_{i = 1}^t {{S_i}}.
\end{equation}
The average testing value will replace the previous one in Eq.(\ref{test}) and determine whether the noise scale should be reduced or not.

To sum up, we formally present our adaptive noise tuning DP-GAN framework in the Algorithm~\ref{gan2}.

\begin{algorithm}[htb]
  \caption{ Framework of the adaptive noise tuning RDP-GAN.}
  \label{gan2}
  \begin{algorithmic}[1]
    \Require
      real samples: $\{x_1,x_2, \cdots\}\sim p(x)$;
      prior samples: $\{z_1,z_2, \cdots\}\sim p(z)$;
      sampling rate: $q$;
      number of discriminator iterations per generator iteration: $n_d$;
      number of generator iteration: $n_g$;
      decay rate: $k$;
      total privacy budget: $\epsilon_{total}$;
    \Ensure
      differential private generator.
    \State Initialize discriminator parameters $d_0$, generator parameters $g_0$.
    \State According to the total privacy budget $\epsilon_{total}$ and the total number of generator iterations $n_g$, estimate the noise scale for each generator iteration by Theorem~\ref{11} and Theorem~\ref{total}. The algorithm ends when $\epsilon_{total}=0$.
    \State According to the total number of discriminator iterations $n_d$, estimate the initial noise scale for each generator iteration by Theorem~\ref{22}.
    \State \textbf{For} $t_1 =1, . . ., n_g$ do

    \State ~~\textbf{For} $t_2 = 1, . . ., n_d$ do

    \State ~~~~Sample $\{x_1,x_2, \cdots\}\sim p(x)$ a batch of $m$ real data points.

    \State ~~~~Sample $\{z_1,z_2, \cdots\}\sim p(z)$ a batch of $m$ prior samples.

    \State ~~~~Train the discriminator $d$.

    \State ~For each data sample $i$, $F_d(W;X)\leftarrow \sum ^ {m}_{i=1}F_d^{t_2}(w_i,x_i)+\mathcal{N}(0,\sigma_t^2)$.

    \State ~~~~Update the discriminator $d$ according to Eq.~(\ref{u1}) and (\ref{u2}).


    \State ~~\textbf{End for}

    \State ~~Update the generator $g$ and generates fakes samples to test.
    \State ~~According to the Eq.~(\ref{test}), update the noise scale $\sigma_{t_1}$ to $\sigma_{t_1+1}$ and calculate the rest $\epsilon_{total}$.
    \State ~~Train the discriminator $d$.
    \State \textbf{End for}

    \State Return differentially private generator $g$.
  \end{algorithmic}
\end{algorithm}
\subsection{Pre-defined Noise Decay Schedules}
In this subsection, we will list several predefined noise decay schedules, and provide comprehensive comparison results with our proposed algorithm in Sec.~5.5.
\subsubsection{Time-Based Decay} According to \cite{darken1991note}, it is defined with the mathematical form
\begin{equation}
\sigma_t=\sigma_0/\left(1+kt\right),
\end{equation}
where $\sigma_0$ is the initial noise scale, $k$ is the decay rate and $t$ is the current iteration, respectively.

\subsubsection{Exponential Decay} It has the mathematical form that
\begin{equation}
\sigma_t=\sigma_0*e^{-kt}.
\end{equation}
\subsubsection{Step Decay} Step decay reduces the learning rate by some factor every few iterations. The mathematical form is expressed as
\begin{equation}
\sigma_t=\sigma_0*k^{t/t^{*}},
\end{equation}
where $t<t^*$ decides how often to reduce noise in terms of the number of iterations.
\subsection{Privacy Preserving Parameter Selection}
The proposed algorithms require a set of pre-defined hyperparameters, such as decay rate $k$. This value will definitely the training time and affect the final model accuracy. Intuitively speaking, a smaller decay rate $k$ will lead to a better performance but longer training iterations. It is expected to find an appropriate decay rate for the schedules under an optimal tradeoff between the learning performance and training. However, this decay rate should be determined by multiple factors such as typical neuron network, training set and other factors. Thus in this subsection, we only propose a straightforward method to obtain an appropriate value. To list $k$ candidates by training $k$ neural networks respectively and directly choose the one that achieves the highest accuracy, accordingly we show the experimental results in Sec.~V-E.
\section{Experimental Results}
In this section, we first evaluate the performance of our analytical results for different privacy budgets. Then, we evaluate our proposed RDP-GAN method on the tabular and image datasets, respectively. In addition, we demonstrate the effectiveness of the improved algorithm with various noise scaling methods and parameter settings.

\subsection{Experimental Setting}
\subsubsection{Dataset}
In our experiments, we use two benchmark datasets for different tasks:
\begin{itemize}
  \item Adult, which consists of $30K$ individual information, has $14$ attributes ($8$ selected in the experiments) for each individual and split into $20K$ training and $10K$ test samples. 
      We show an Adult dataset sample with some attributes in Table~\ref{tab1}.
      \begin{table}
\centering
\caption{Some examples of the Adult dataset}\label{tab1}
\resizebox{80mm}{10mm}{
\begin{tabular} {|c|c|c|c|c|c|c|c|}
  \hline
  Age  &Occupation& Education &Gender& Work Class& Marital Status & Work Hour & Income \\
  \hline
  39  &Sales& Bachelors & Male  & State-gov & Never-married& 30&$\leq 50K$ \\
  \hline
  38  & Tech-support& HS-grad & Male & Private& Divorced & 35& $\leq 50K$\\
  \hline
  44 & Prof-specialty &Masters  & Male & Private& Married-civ-spouse&42& $> 50K$ \\
  \hline
  43 & Other-service & Some-college & Female & State-gov& Separated&26&$\leq 50K$ \\
  \hline
\end{tabular}}
\end{table}
  \item MNIST, which consists of $70K$ handwritten digit images of size $28 \times 28$, split into $60K$ training and $10K$ test samples. 
      We show the typical samples of the standard MNIST dataset in Fig.~\ref{mnist}.
\begin{figure}
\centering
  \includegraphics[width=0.4\textwidth]{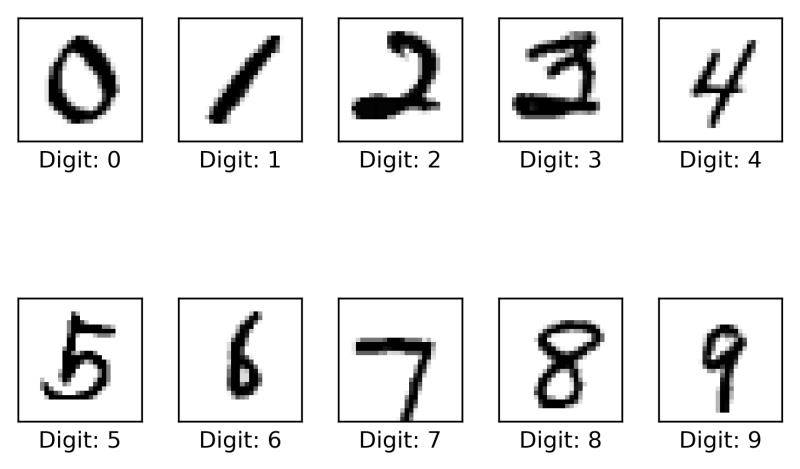}
  \caption{Some samples of the MNIST dataset } \label{mnist}
\end{figure}
\end{itemize}

\subsubsection{GAN}
The neural network architecture is adapted to the deep convolutional GAN (DCGAN), where the discriminator is a convolutional neural network (CNN) that contains $3$ layers. In each layer, a list of learnable filters are applied to the entire input matrix, where data samples are converted into square matrices in our method. To make the loss value bounded, we choose the Sigmoid activation function as the last layer that generates the probability of being real or fake. The loss function of the discriminator is set to use Cross Entropy function. The generator is also a neural network that consists of $3$ de-convolutional layers.

In the experiment, we set the learning rate of discriminator $a_d$ and generator $a_g$ to be $5.0 \times 10^{-5}$. The sampling rate $q$ is set according to different batch sizes, and the number of iterations on discriminator ($n_d$) and generator ($n_g$) are $5$ and $10^3$, respectively.

\subsubsection{Privacy and Utility Level Estimation}
In the experimental results, we set up two privacy levels: $\epsilon_{total}=0.5$ and $\epsilon_{total}=5$, respectively. To verify the generated results, we use an additional classifier to test the accuracy with right labels in the MNIST dataset. The classifier is trained by the $10K$ test samples in the MNIST dataset and is used to distinguish whether an input digit with a predefined label is belong to the right label. Moreover, the classifier uses a multilayer perceptron (MLP) neural network with $3$ layers. Except for the input nodes, each node is a neuron that uses Sigmoid activation function.
In the Adult dataset,
we conducted the probability mass function (PMF) and the absolute average error with the true data for each attribute. For the relationship among different attributes, we use a trained classifier to test the accuracy. In addition, the absolute average error is calculated by the following equation:
\begin{equation}
\textrm{Error} = \sum\limits_{x = 1}^{\left| X \right|} {\left| {{p_g}(x) - {p_r}(x)} \right| \times x};
\end{equation}
where $|X|$ denotes the total number of labels, $p_g(x)$ and $p_r(x)$ denote the PMF of the generated and real data, respectively.

\subsection{Analytical Results on the Privacy Level}
\subsubsection{Investigation on the Sensitivity}
Before we show the analytical results of the proposed RDP-GAN, we need to investigate the sensitivity on the loss function. According to Eq.~(\ref{eq:sen}), we can first estimate the bound of loss function and calculate the sensitivity. We conduct experimental results on the Adult and MNIST dataset, respectively, and record the maximum loss value without injecting noises in In Table~\ref{tab2}.
\begin{table}
\centering
\caption{Simulation results for the sensitivity value on the loss function.} \label{tab2}
\resizebox{80mm}{10mm}{
\begin{tabular}{|c|c|c|}
  \hline
  {} & Batch size $= 64$ & Batch size $= 128$ \\
  \hline
  Maximum value of loss function on Adult & 17.852 & 21. 242\\
  \hline
  Maximum value of loss function on MNIST & 23.534 & 26.823 \\
  \hline
  $C$ on Adult & 20 & 23\\
  \hline
  $C$ on MNIST & 25 & 28 \\
  \hline
  $\Delta S$ on the Adult dataset & 0.3125 &  0.1796 \\
  \hline
  $\Delta S$ on the MNIST dataset & 0.1953 & 0.2187 \\
  \hline
\end{tabular}}
\end{table}
From the results we can find a large batch size will lead to a large maximum loss function value on both datasets. Accordingly, we set $C$ for different situations in Table.~\ref{tab2}\footnote{Note here the value of $C$ is only used to estimate the privacy loss. Nevertheless, we will show the proposed privacy analysis can achieve a lower bound compared to other method under same noise.}. The results also indicate that one individual will produce less influence in a larger dataset.

\subsubsection{Analytical Results on the Privacy Level of Generator}
To show the effectiveness of the RDP-GAN, we compare it with the method using Moment Accountant (MA) algorithm \cite{8636556,abadi2016deep} as shown in the follow figures. In \cite{8636556,abadi2016deep}, the privacy level of generator $\epsilon_g$ in each iteration can be expressed as
\begin{equation}
{\epsilon_g} = \frac{{2q\sqrt {{n_d}\log \left( {\frac{1}{\delta }} \right)} }}{\sigma },
\end{equation}
where the sensitivity is omitted in the formula. In addition, the expression of $\epsilon_g$ in MA has a order of $O(\frac{1}{\sigma})$, where our analysis method (RDP) will achieve a order of $O(\frac{1}{\sigma^2})$ respect of the injected noise. This will lead to a higher privacy level in RDP when the noise scale is qualitative larger, e.g., $\sigma > 10$.
\begin{figure} \label{fig_privacy}
\centering
\subfigure[Batch size $= 64$]{\label{fig_oneway}
  \includegraphics[width=0.4\textwidth]{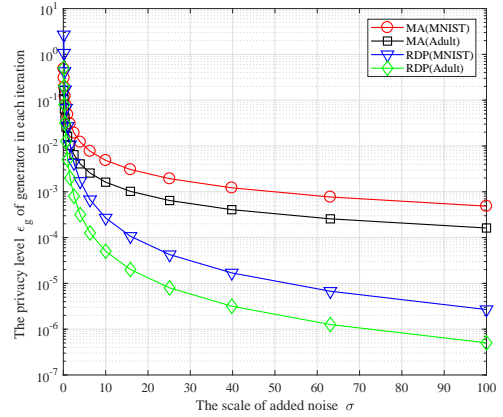}
}
\subfigure[Batch size $= 128$.]{\label{fig_twoway}
  \includegraphics[width=0.4\textwidth]{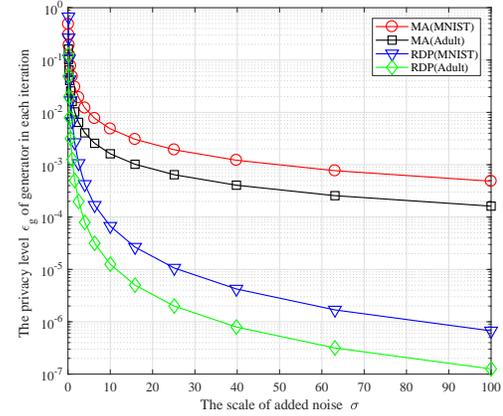}}
 \caption{The analysis privacy level with different noise scales.} \label{fig_privacy}
\end{figure}

In Fig.~\ref{fig_privacy} we investigate the privacy level in two batch sizes. As can be found in these figures, the proposed RDP method can achieve a better privacy level when larger noises are added as expected. Moreover, the privacy level in the Adult dataset is always higher (a smaller $\epsilon$) than the MNIST dataset. The intuition behind this is that the sensitivity in the Adult dataset is lower than it in the MNIST dataset as it has less attributes.

\subsection{Experimental Results on the MNIST Dataset}
In this subsection, we excute the proposed algorithm on the MNIST dataset. In Fig.~\ref{fig_msample}, we first show the real and generated data samples with different noise scale.
We also show the results generated by the DP-GAN \cite{8636556,xie2018differentially} for comparison, which adds noise on the gradients. From the figure we can find the generated digits with low privacy level can well preserve the characters of the true ones, and with the increase of the privacy level, the quality begins to drop as there are unclear pixels exists. For example, we can find the digit $2$ and digit $8$ are hard to recognize in the generated samples when $\epsilon_{total}=5$. Moreover, the quality generated by DP-GAN is obviously worse than the proposed RDP in both privacy levels.
\begin{figure}
\centering
  \includegraphics[width=0.4\textwidth]{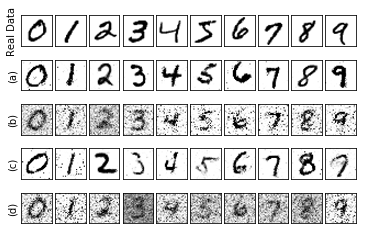}
  \caption{Generated MNIST samples with different noise scale: (a) RDP ($\epsilon_{total}=0.5$); (b) DP-GAN ($\epsilon_{total}=0.5$); (c) RDP ($\epsilon_{total}=5$); (d) DP-GAN ($\epsilon_{total}=5$).} \label{fig_msample}
\end{figure}

In order to further verify the qualities of the generated samples, we use an additional classifier to test the accuracy.
We show the test accuracy along with the training process (i.e., $1000$ iterations) in the following figures.
\begin{figure} \label{macc}
\centering
\subfigure[No noise]{\label{0}
  \includegraphics[width=0.22\textwidth]{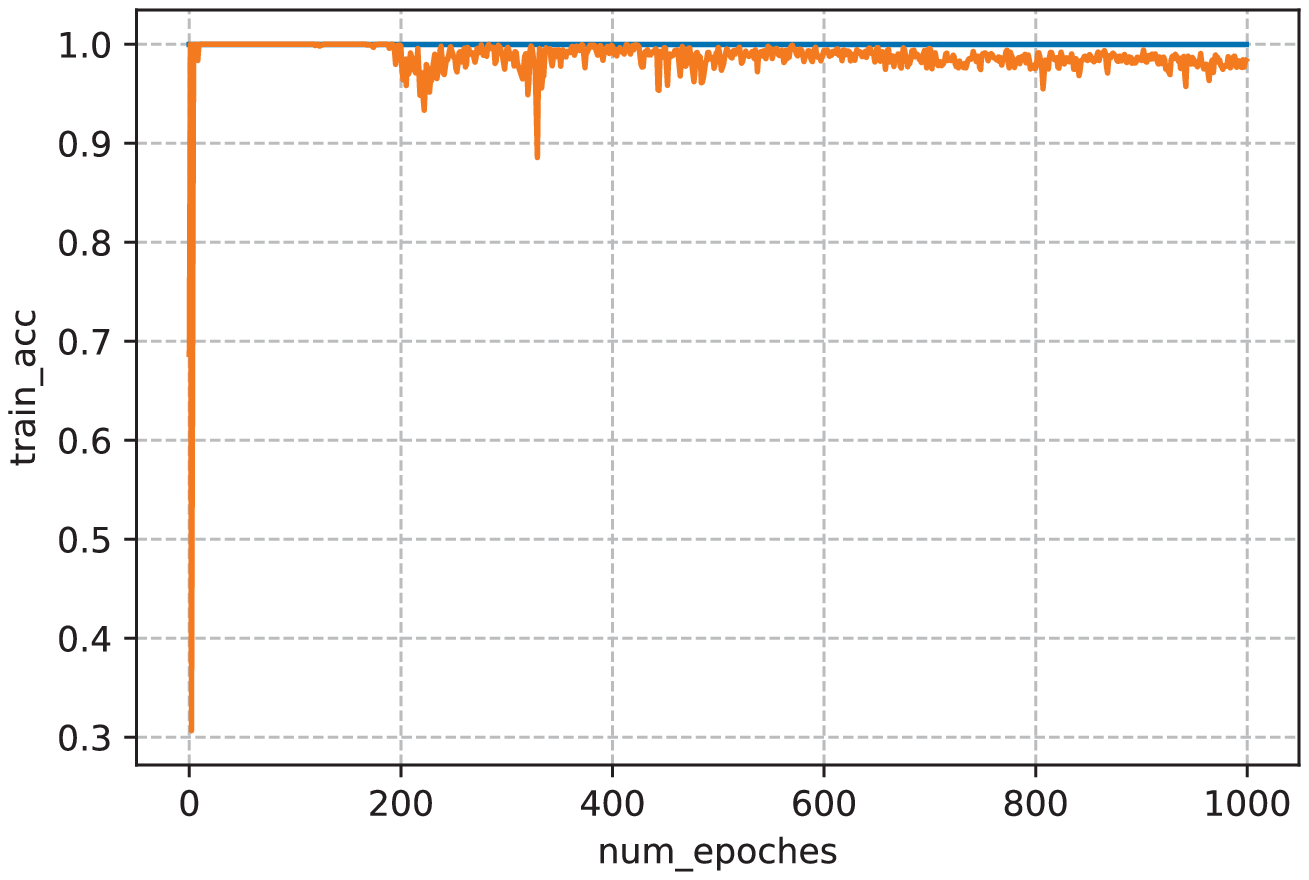}
}
\subfigure[$\sigma=0.5$]{\label{0.5}
  \includegraphics[width=0.22\textwidth]{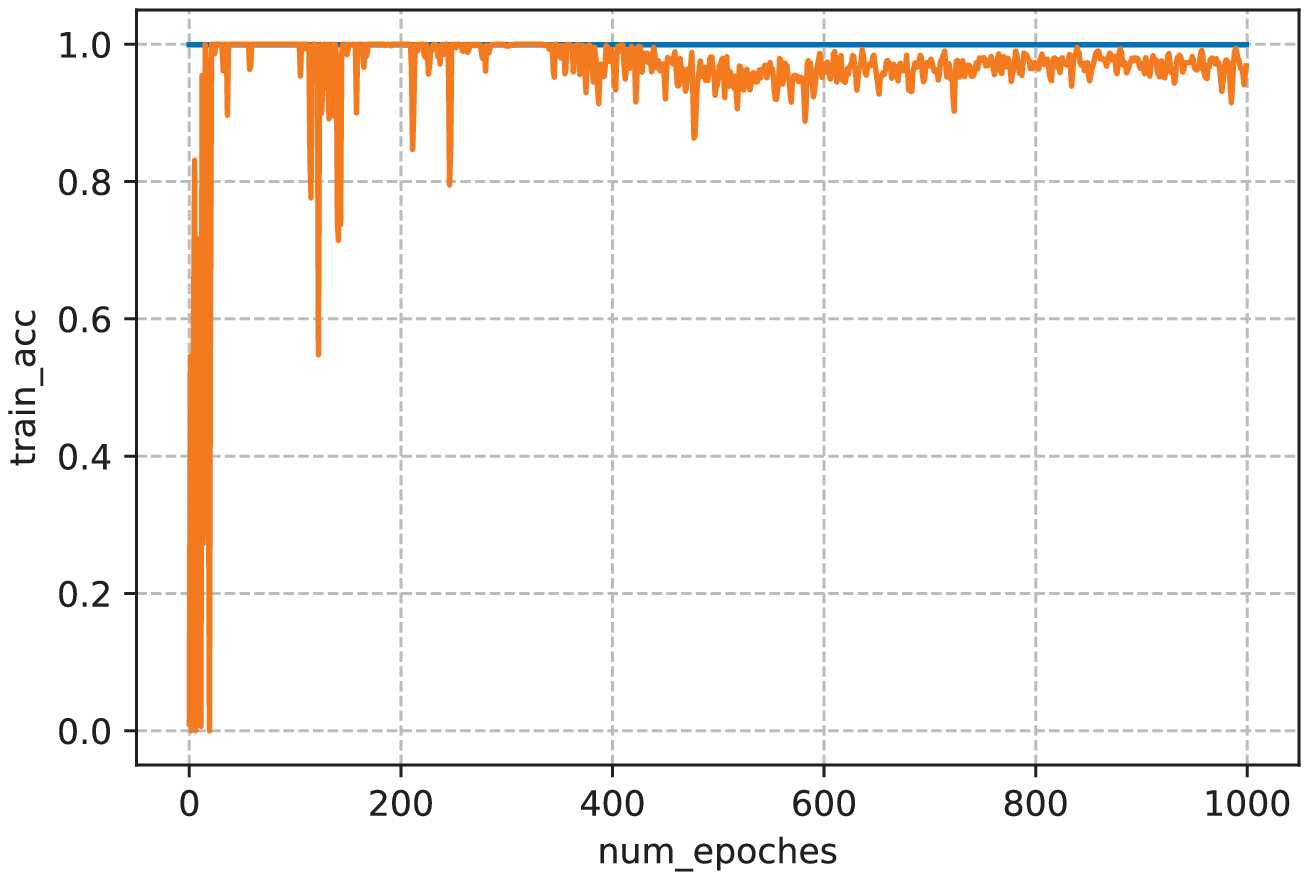}}
\subfigure[$\sigma=5$]{\label{5}
  \includegraphics[width=0.22\textwidth]{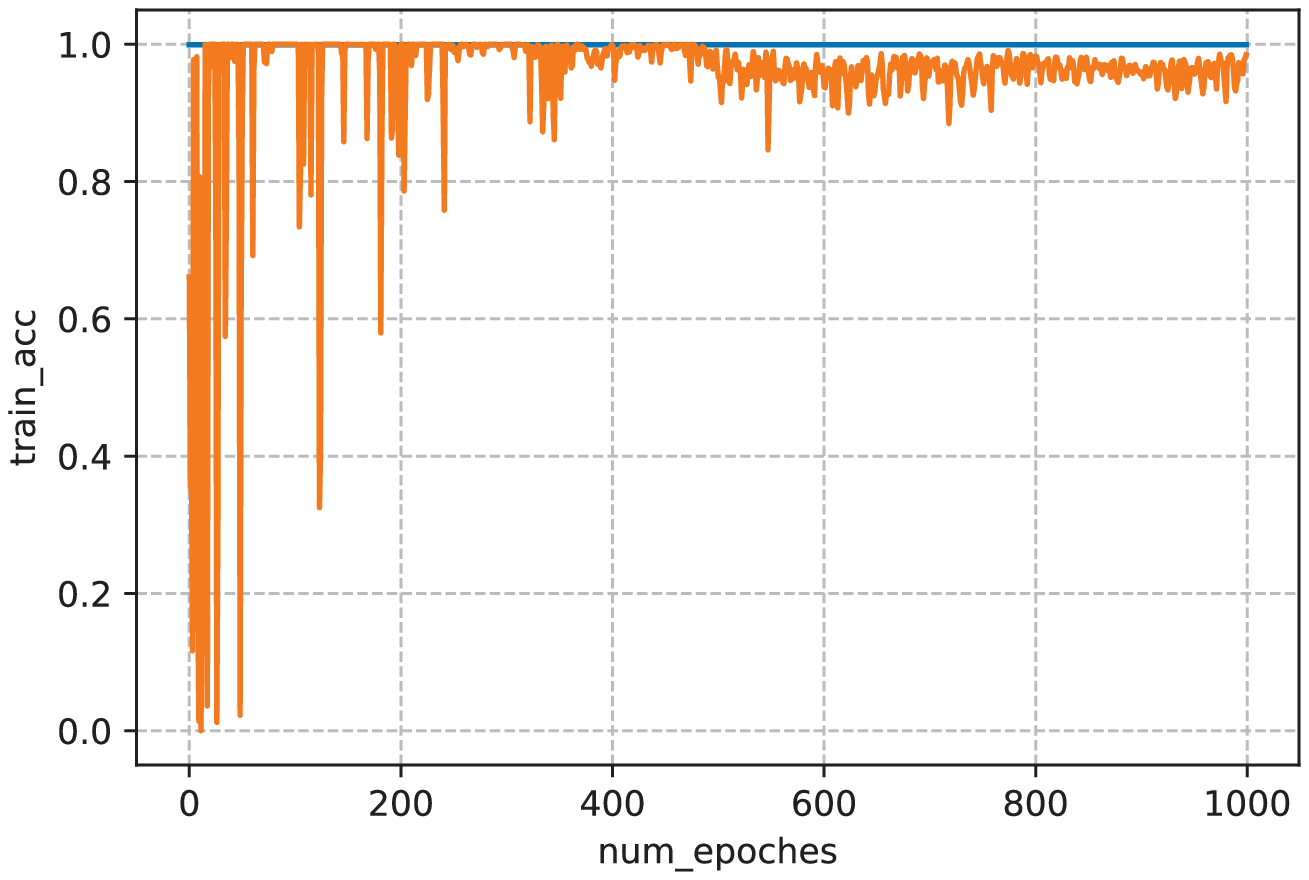}}
\subfigure[$\sigma=10$]{\label{10}
  \includegraphics[width=0.22\textwidth]{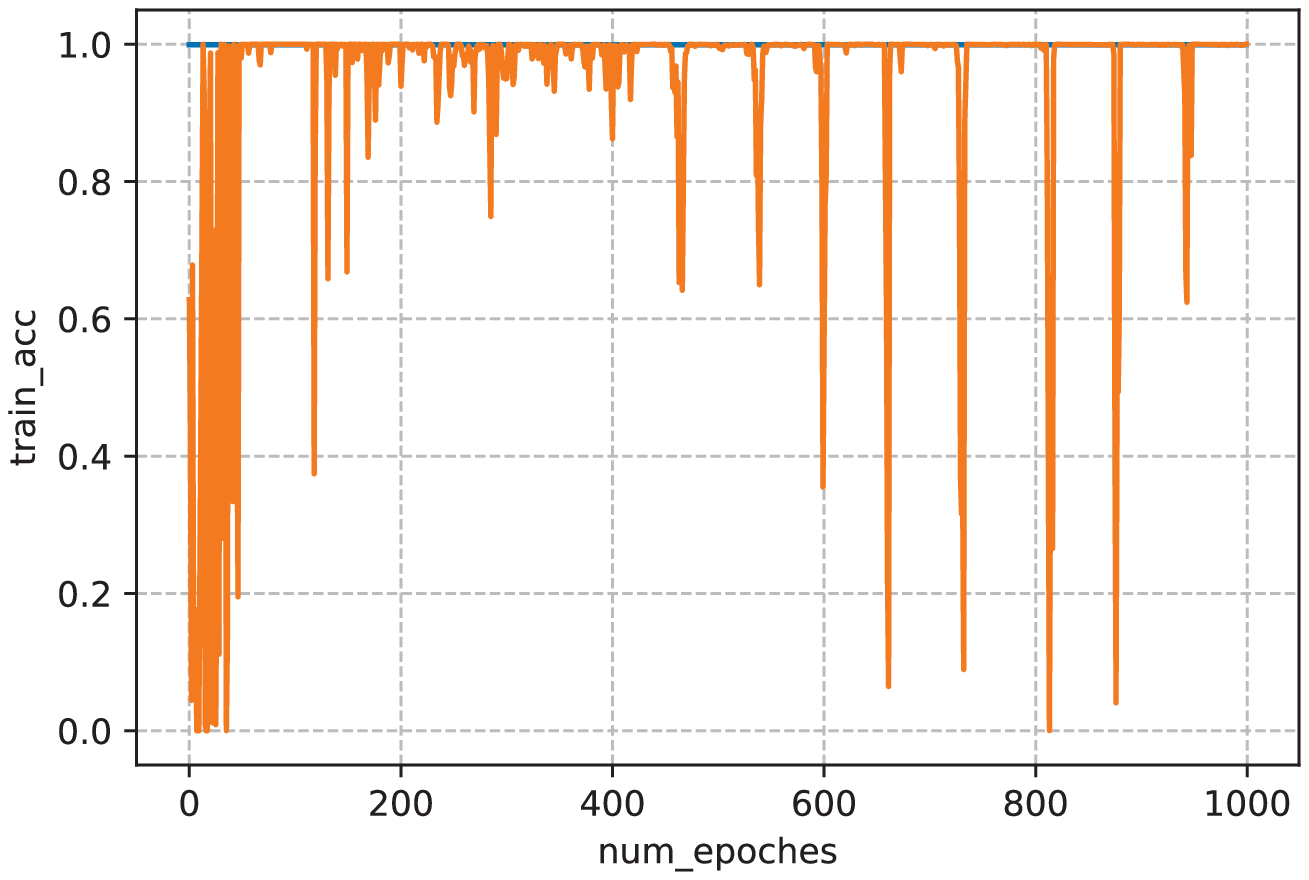}}
 \caption{The classification accuracy with different noise sacles in MNIST dataset.} \label{macc}
\end{figure}
From Fig.~\ref{macc} we can first find that the generated samples without privacy protection, i.e., $\sigma=0$, can achieve relative high accuracy in the classifying. Second, with the increase of the noise scale, the accuracy performance begins to fluctuate, which is brought from the injected noises. In addition, a few collapse cases happen when a large noise is added. For example, we can find in Fig.~\ref{10} the accuracy always suddenly drops to $0.3$ or less. This is because the large scale of $\sigma$ will probably brings about huge noises to the loss function that the system cannot stand. Therefore, it will lead to unacceptable results in the generated samples, i.e., the digit $8$ in Fig.~\ref{fig_msample} when $\sigma=20$.

\begin{figure} \label{fig_MC}
\centering
\subfigure[$\epsilon_{total}=0.5$]{\label{fig_M05}
  \includegraphics[width=0.4\textwidth]{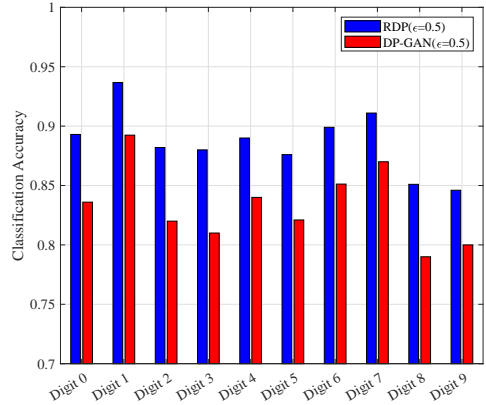}
}
\subfigure[$\epsilon_{total}=5$]{\label{fig_M5}
  \includegraphics[width=0.4\textwidth]{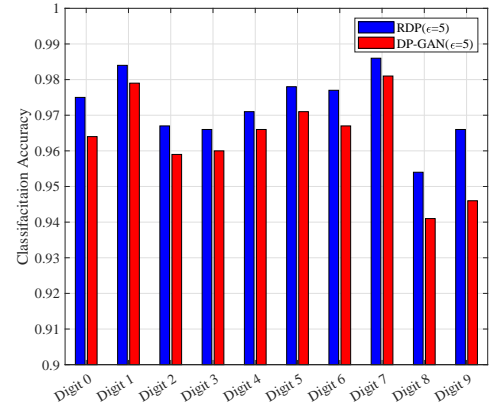}}
 \caption{The classification accuracy comparison with DP-GAN.} \label{fig_MC}
\end{figure}

In addition, we compare our algorithm (RDP) with the method that adds noise on the parameters, i.e., DP-GAN in two privacy levels, $\epsilon_{total}=0.5$ and $\epsilon_{total}=5$, respectively. As can be found in Fig.~\ref{fig_MC}, the proposed algorithm has a better performance compared with DP-GAN in both privacy levels, in which the biggest performance gap is around $7\%$ when classifying digit $3$ in the high privacy level. As explained in Sec. 3.1, the superior performance of the proposed RDP-GAN compared with the DP-GAN is because that adding noise on the loss function can bring in more explicit privacy protection.
\subsection{Experimental Results on the Adult Dataset}
In this subsection, we show the performance of the proposed algorithm on the Adult dataset.
\begin{figure} \label{adult}
\centering
\subfigure[Age]{\label{age}
  \includegraphics[width=0.22\textwidth]{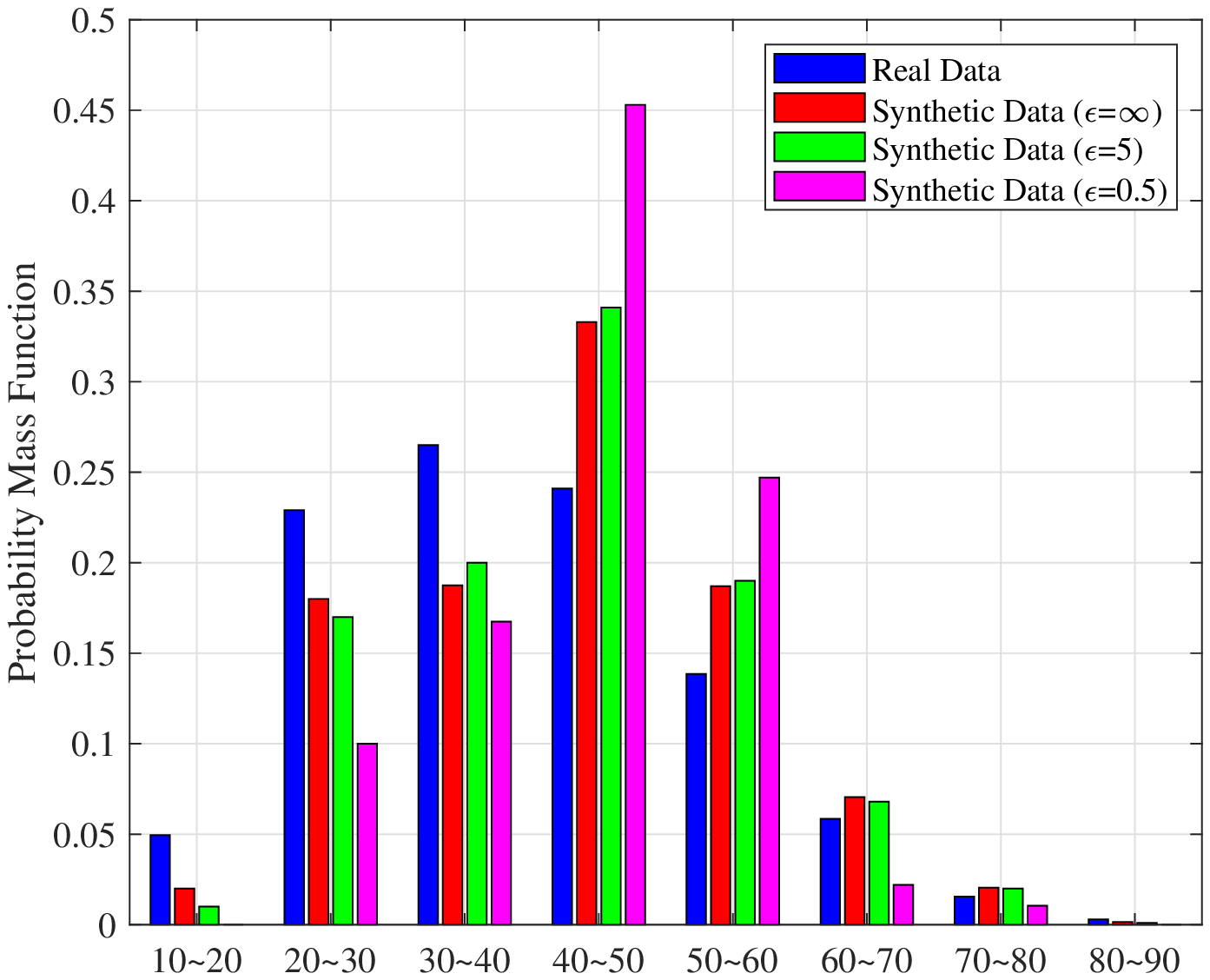}
}
\subfigure[Occupation]{\label{occ}
  \includegraphics[width=0.22\textwidth]{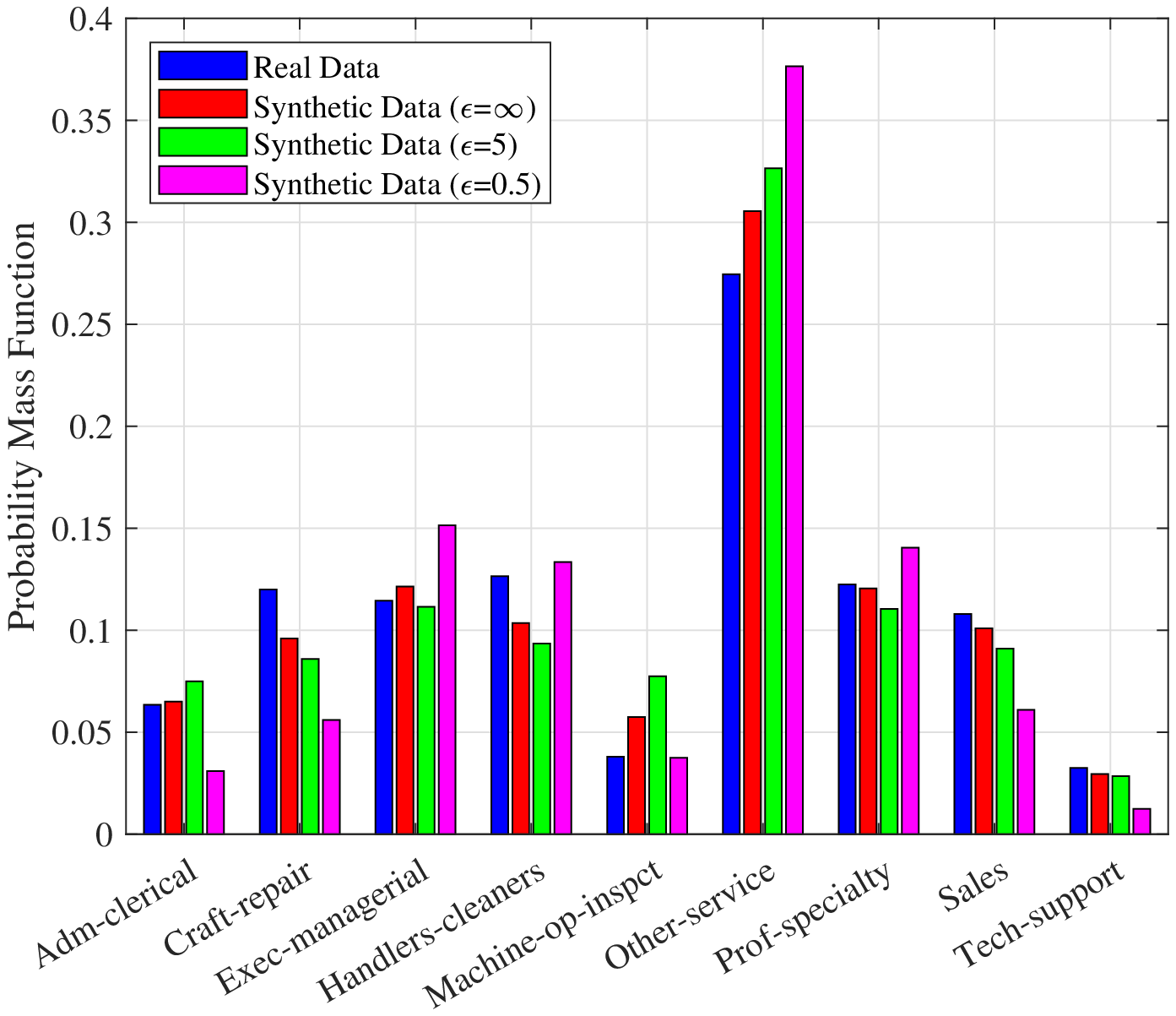}}
\subfigure[Education]{\label{edu}
  \includegraphics[width=0.22\textwidth]{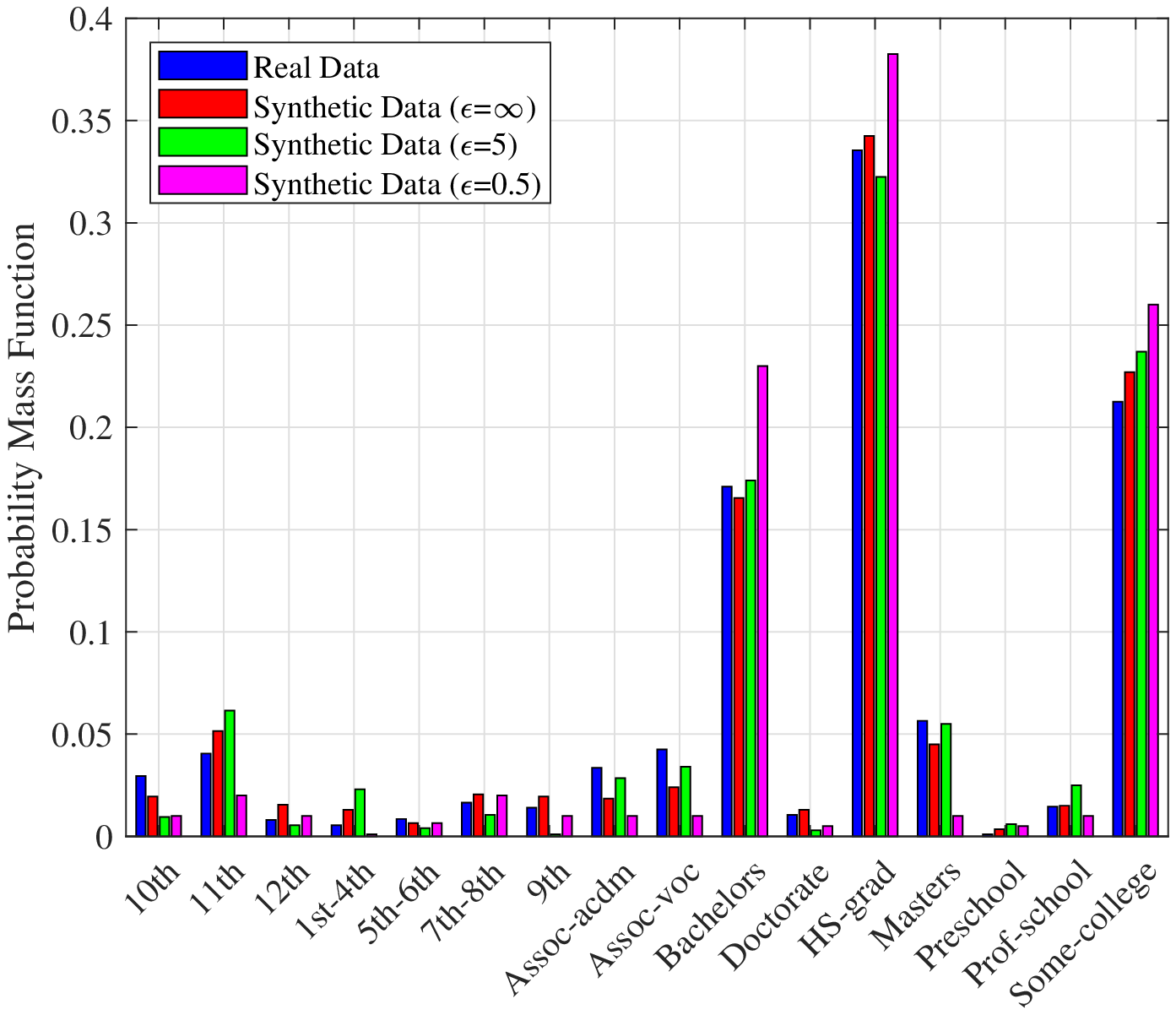}}
\subfigure[Gender]{\label{gen}
  \includegraphics[width=0.22\textwidth]{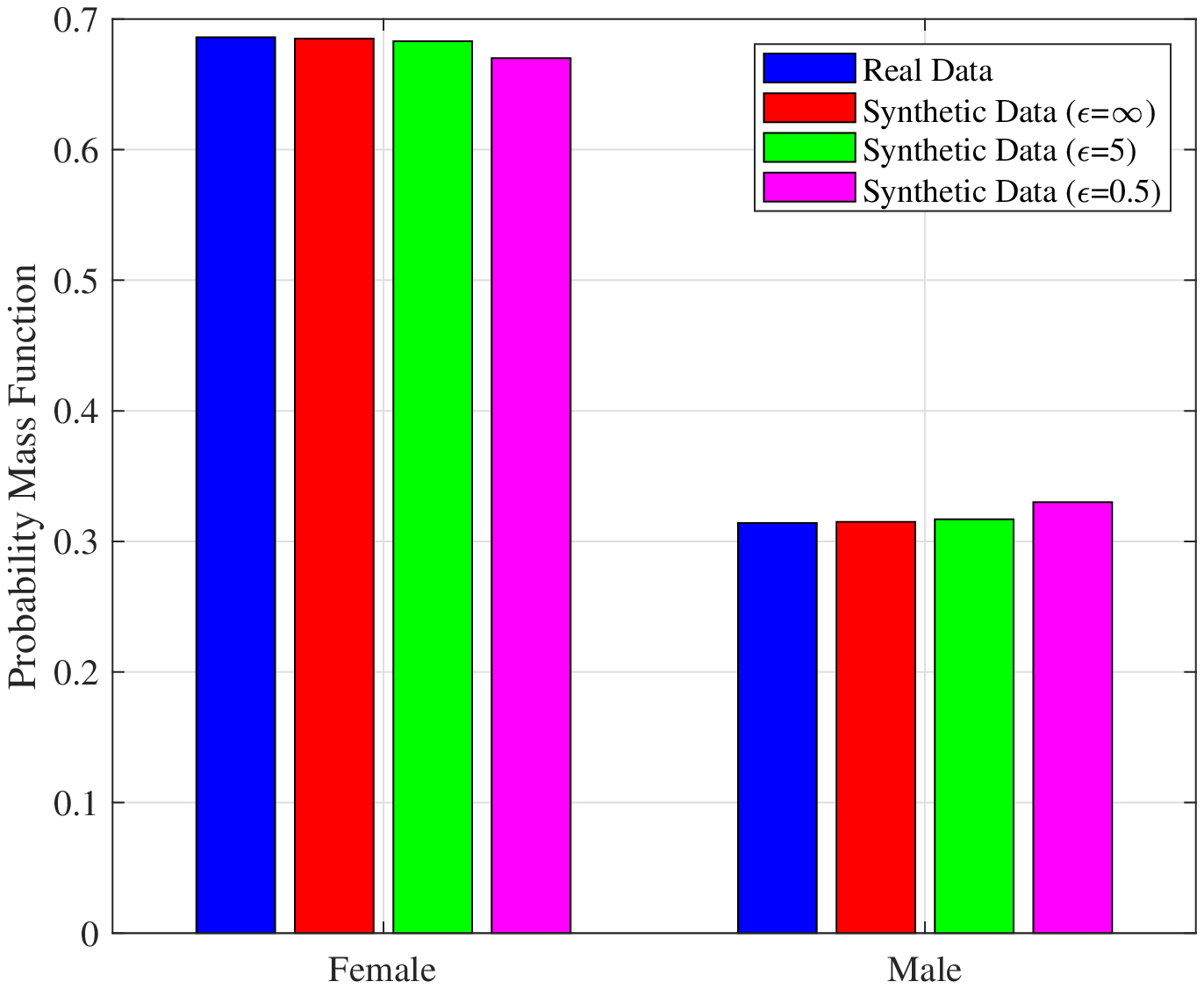}}
  \subfigure[Work Class]{\label{gov}
  \includegraphics[width=0.22\textwidth]{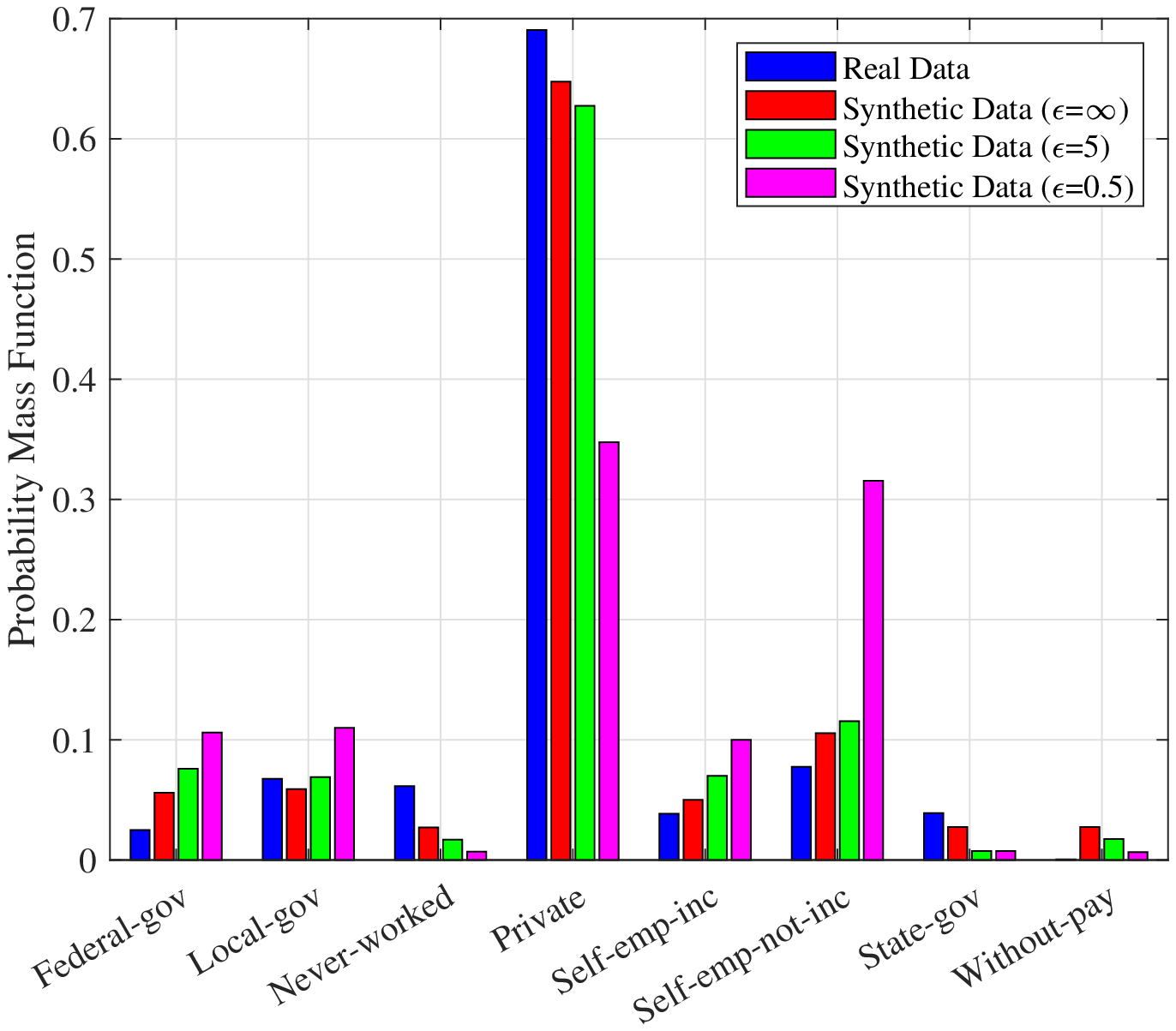}}
   \subfigure[Marital Status]{\label{mar}
  \includegraphics[width=0.22\textwidth]{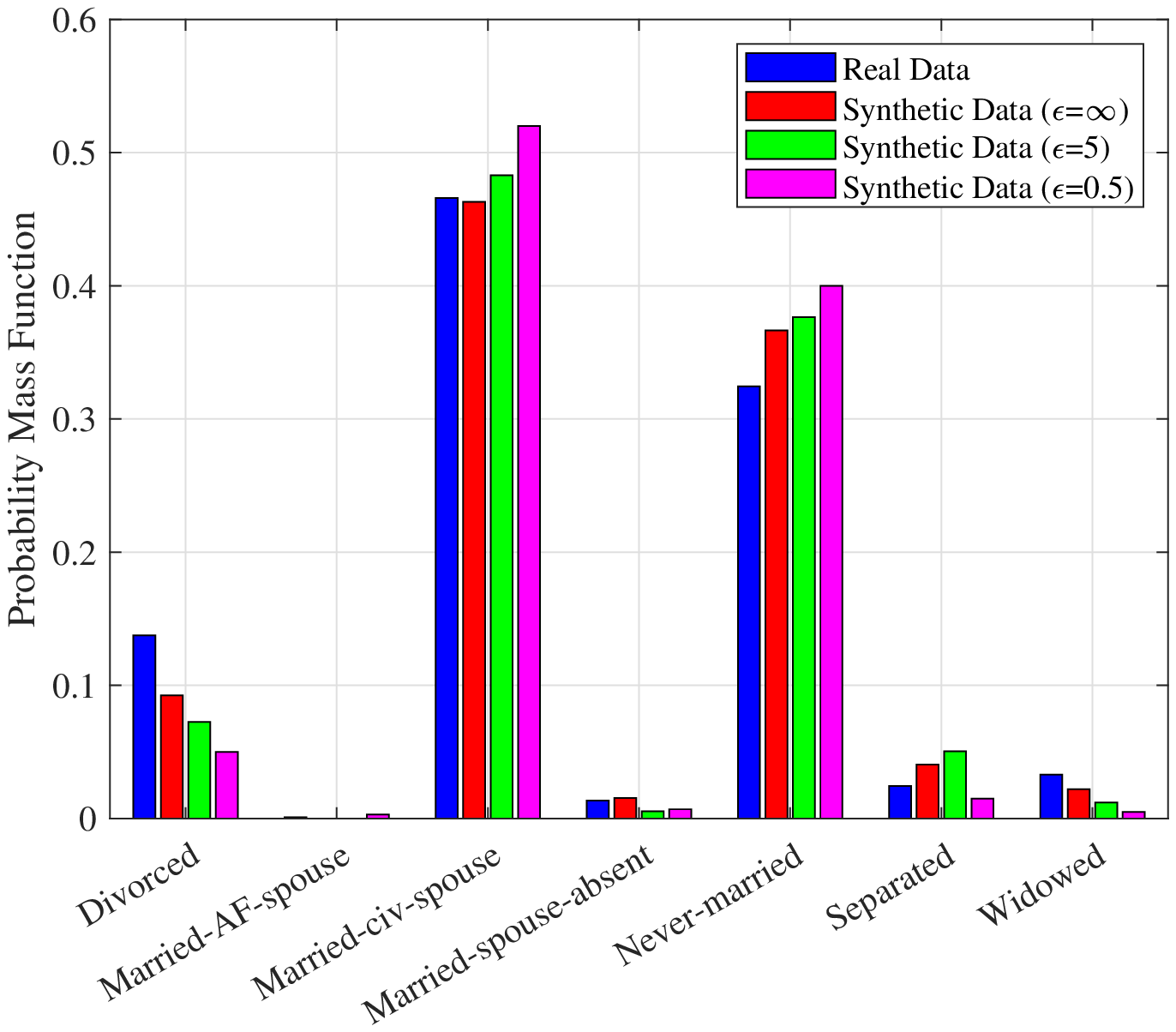}}
  \subfigure[Work Hour]{\label{work}
  \includegraphics[width=0.22\textwidth]{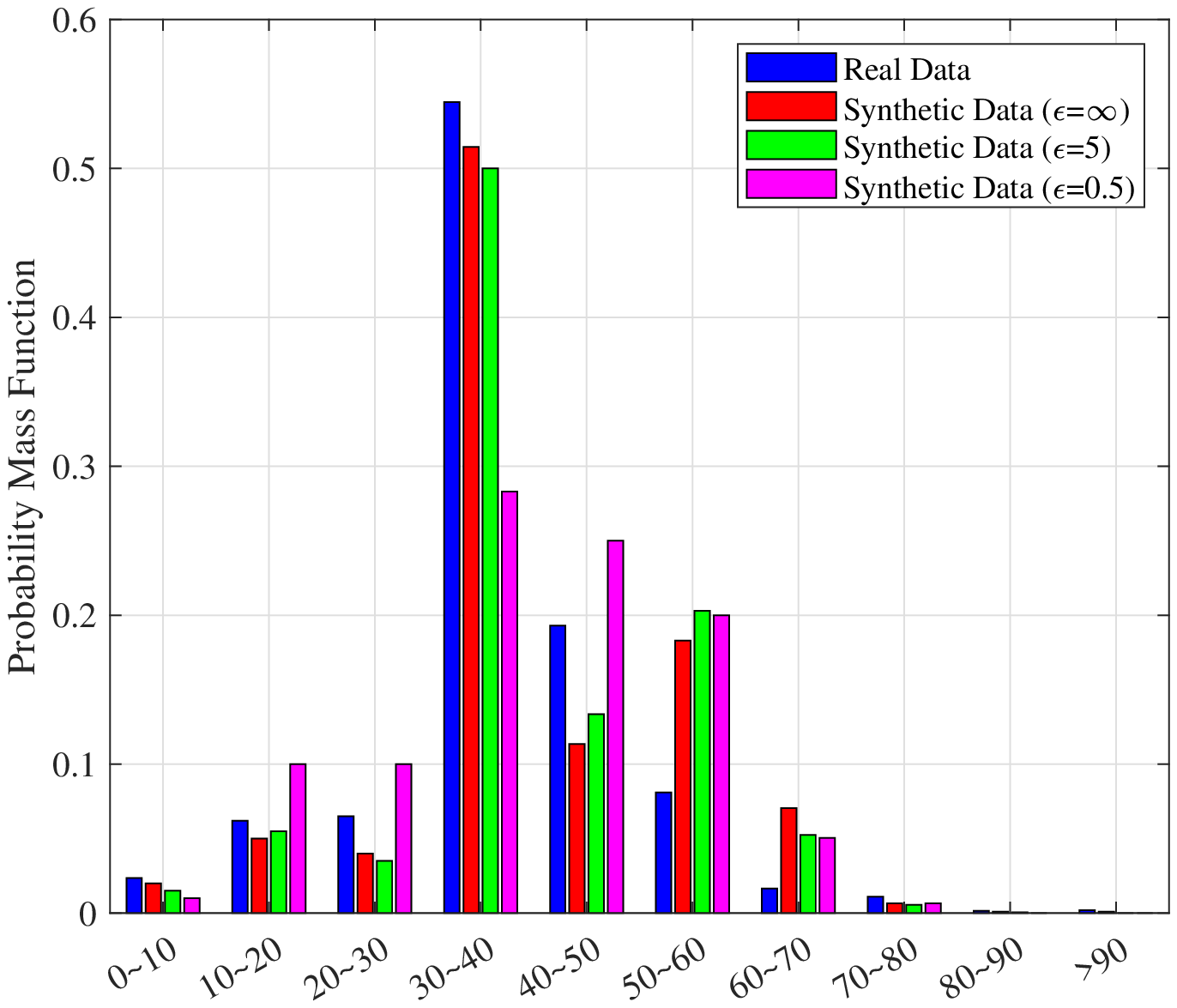}}
  \subfigure[Income]{\label{inc}
  \includegraphics[width=0.22\textwidth]{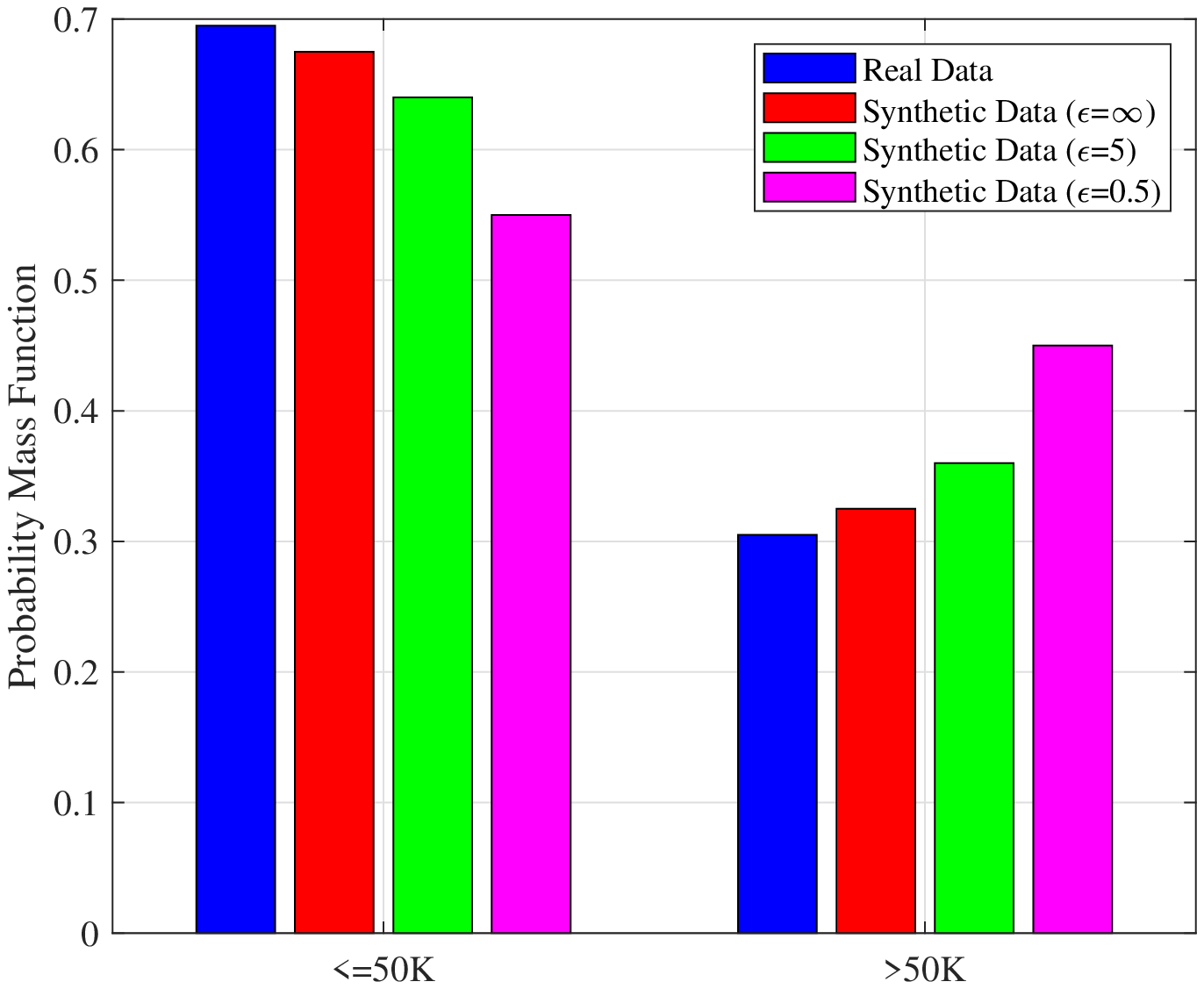}}
 \caption{The PMF of generated {synthetic} samples.} \label{adult}
\end{figure}
In Fig.~\ref{adult} we show the probability mass function (PMF) of the generated attributes. Take the attribute of Age as an example, we use binning to divide the age range into $9$ regions as shown in Fig.~\ref{age} to increase the training speed,. As can be found in Fig.~\ref{age}, the generated samples can achieve similar distribution with the true ones, while the samples with larger noise make more differences. Moveover, the largest difference is focused on the edge domain. The main reason may because the generator always produces samples that are likely to appear in the true dataset, i.e., the one close the average value. Similar phenomenon can be found in other attributes as well. To further investigate this the quality of the generated samples, we show the absolute average error for these attributes. As can be found in Table.~\ref{error}, when more noises are injected, the absolute average error increases. Thus, the proposed algorithm may not have the capability to achieve high performance in maintaining the character of unique attributes, especially in the tabular dataset. To further enhance this performance, it may need more intelligent design on the generator, which is left as our future work.
\begin{table}
\centering
\caption{Absolute average error for the generated samples with different privacy levels.} \label{error}
\resizebox{80mm}{6mm}{
\begin{tabular}{|c|c|c|c|c|c|c|c|c|}
  \hline
  {} & Age & Occ. & Edu.& Gen.& Work C&Mar.&Work H&Inc.\\
  \hline
  $\epsilon=\infty$ & 1.09& 0.68 & 0.92& 0.01& 0.834& 0.36&1.66&0.06\\
  \hline
  $\epsilon=5$ & 1.1 & 1.23 & 1.23& 0.009& 1.11& 0.57&1.65&0.17\\
  \hline
  $\epsilon=0.5$ & 3. & 1.75 & 1.87& 0.048& 2.21 & 0.69&2.55&0.44\\
  \hline
\end{tabular}}
\end{table}

To verify the correlation among different attributes, we use a trained classifier to determine whether the income of an individual is larger than $50K$ or not. Experimental results on the comparison of the proposed RDP-GAN with DP-GAN can be found in Fig.~\ref{testacc}.
\begin{figure}
\centering
\subfigure[Classifier trained by real data]{\label{bar1}
  \includegraphics[width=0.4\textwidth]{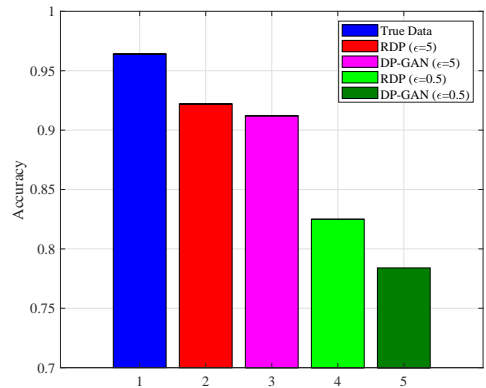}
}
\subfigure[Classifier trained by synthetic data]{\label{bar2}
  \includegraphics[width=0.4\textwidth]{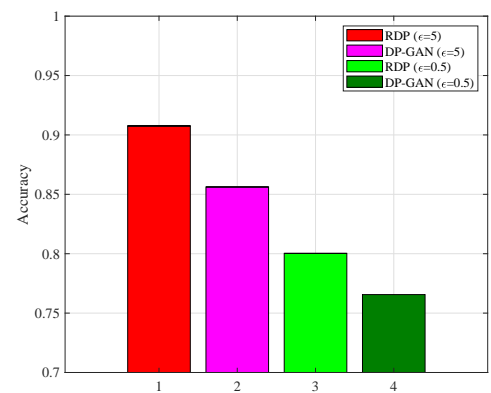}}
  \caption{Correlation comparison with DP-GAN.} \label{testacc}
\end{figure}
From Fig.~\ref{bar1} we can find the classifier is able to obtain a fairly high accuracy trained by the true data samples, which has around $96.4\%$ accuracy. Moreover, compared with DP-GAN, the proposed RDP algorithm can better capture the relationship among attributes in both privacy level. For example, when $\epsilon=0.5$, the performance gap is around $4.1\%$ ($82.5$ v.s. $78.4\%$). In addition, in Fig.~\ref{bar2} we trained the classifier using the synthetic data, and test its accuracy using the true data. The results also show that the proposed algorithm can achieve a better performance than DP-GAN at a same privacy level.
\subsection{Experimental Results on the Adaptive Noise Tuning Algorithm}
In this subsection, we first conduct experiments to find the appropriate decay rate for different algorithms. Specifically, we choose $\epsilon_{total}=0.5$ and $1000$ iterations, which leads to adding $\sigma=10$ to the discriminator in each iteration, and the testing accuracy $\tau$ is set to $0.5$ to prevent model collapsing. In addition, we denote the Time-based decay, Exponential decay and Step decay as Time, Exp and Step, respectively, and the $t^*$ in Step is set to $100$. In the following, we show the trend of the testing accuracy with different value of decay rate $k$ in the Adult dataset.
\begin{figure} \label{scale}
\centering
\subfigure[Time and Exp]{\label{scale1}
  \includegraphics[width=0.22\textwidth]{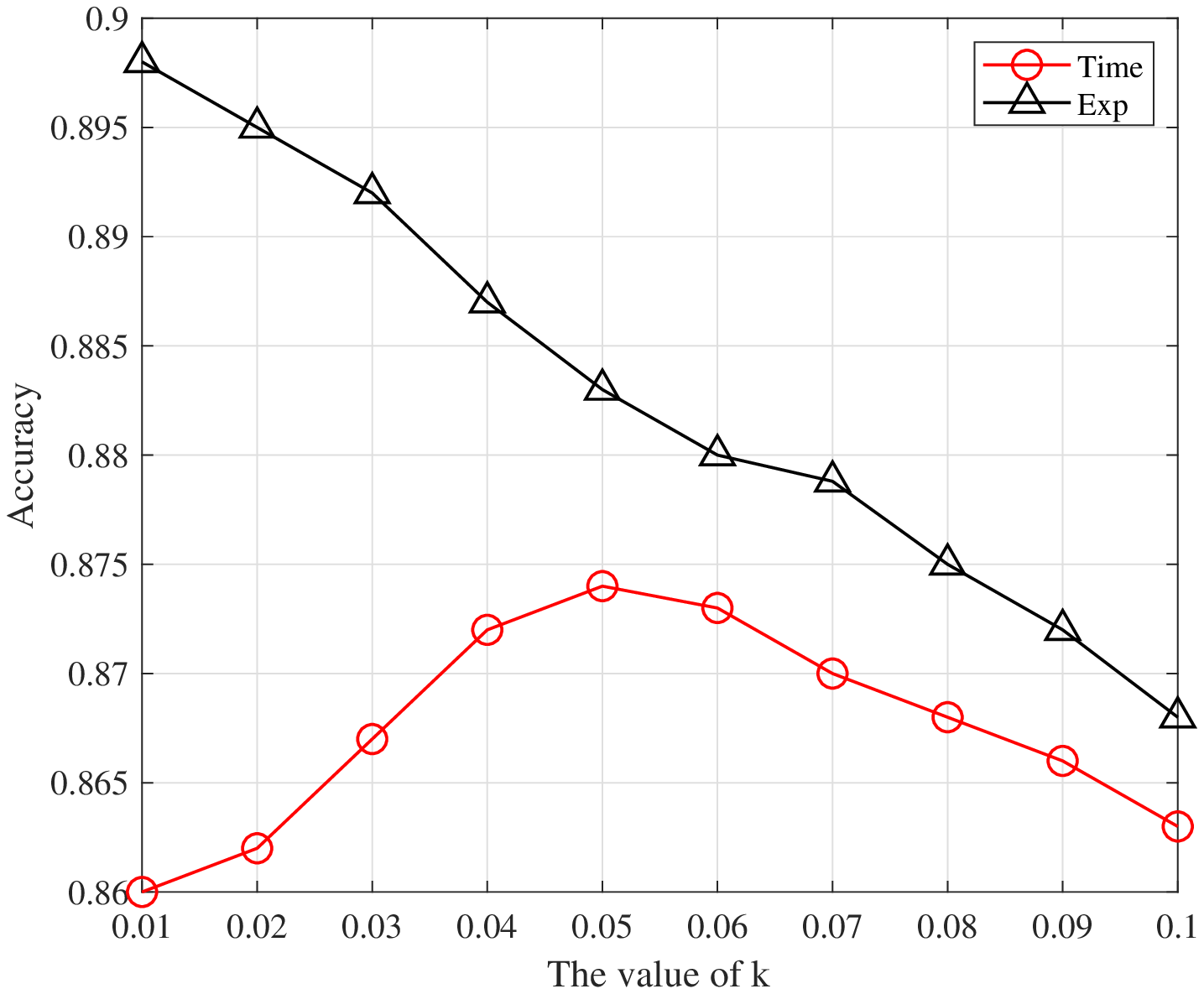}
}
\subfigure[Step and ANT-RDP]{\label{scale2}
  \includegraphics[width=0.22\textwidth]{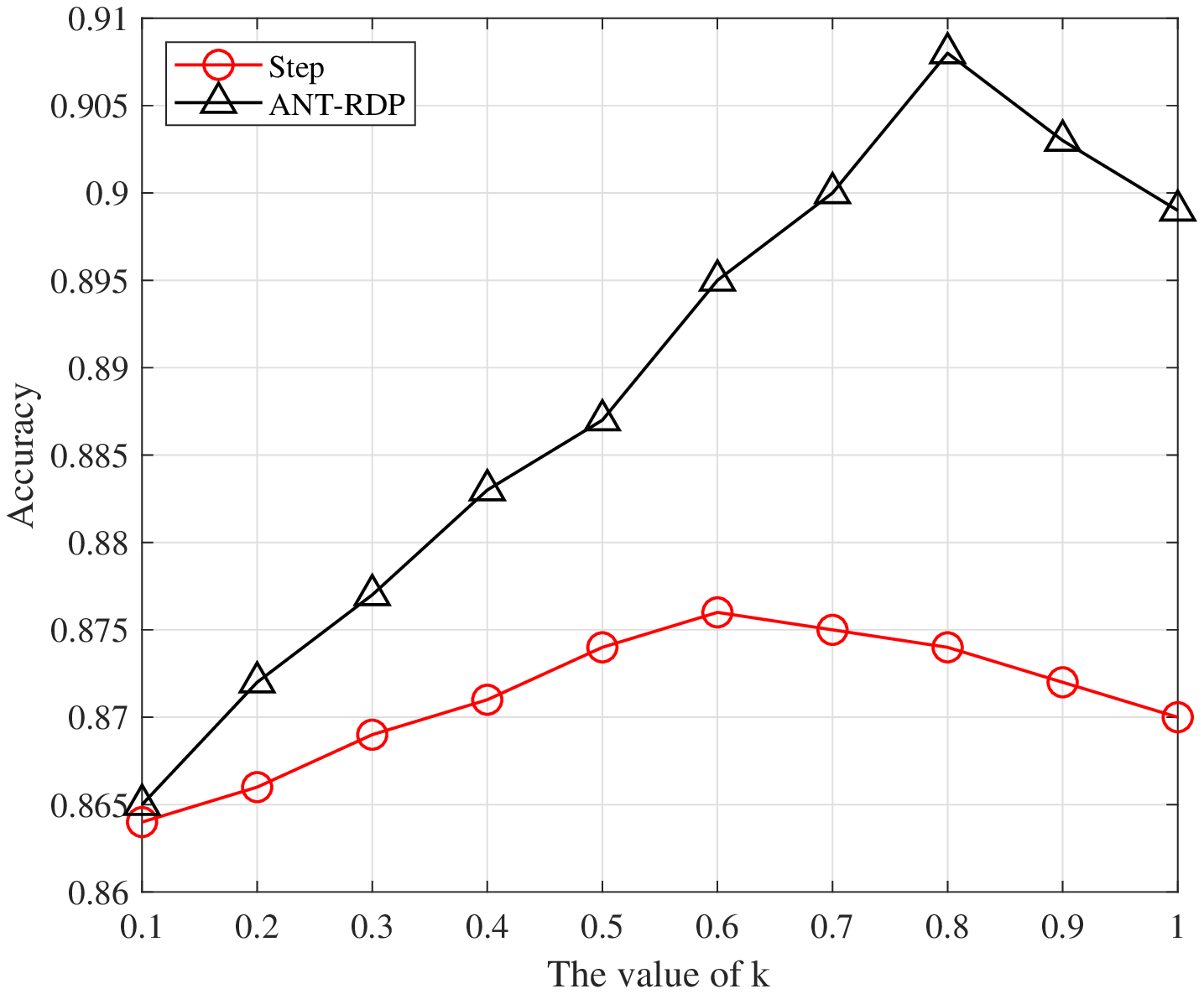}}
 \caption{The accuracy with different decay rate $k$ in the MNIST Dataset.} \label{scale}
\end{figure}
\begin{figure} \label{scale1}
\centering
\subfigure[Time and Exp]{\label{scale11}
  \includegraphics[width=0.22\textwidth]{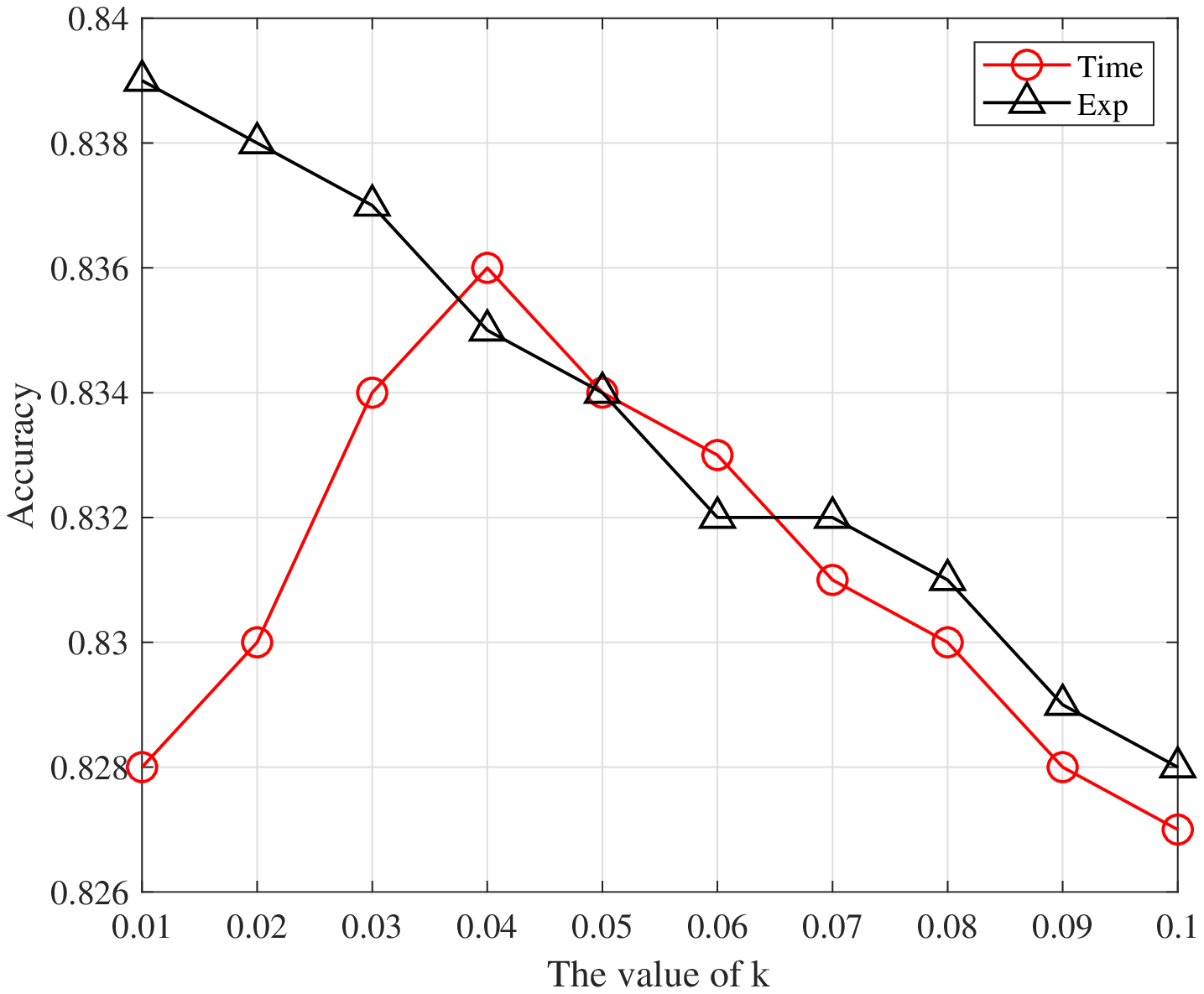}
}
\subfigure[Step and ANT-RDP]{\label{scale12}
  \includegraphics[width=0.22\textwidth]{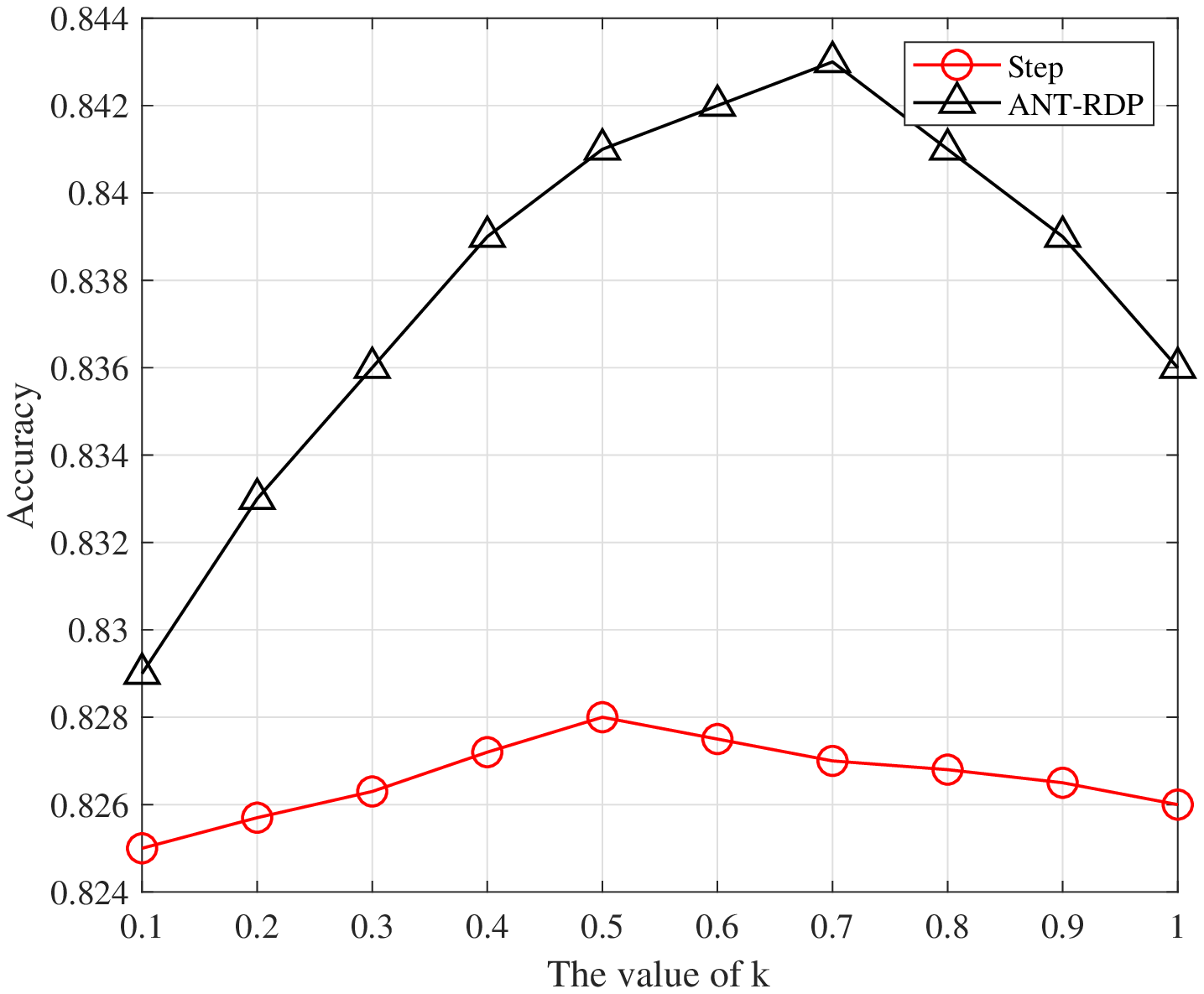}}
 \caption{The accuracy with different decay rate $k$ in the MNIST Dataset.} \label{scale1}
\end{figure}

From Fig.~\ref{scale} and Fig.~\ref{scale1}, we can find there exists an optimal decay rate in the current system setting. For example, in the ANT-RDP algorithm, the best accuracy occurs when $k=0.8$ in the Adult dataset, and $k=0.7$ in the Adult dataset, respectively. In addition, the performance of the Time, Step and ANT-RDP schedules seem to have a convex relationship with the value of $k$. Intuitively speaking, it may because that a lower decay rate causes to a longer training time, and it will produce a worse accuracy because most iterations will suffer from noise-added parameters.  On the other side, a higher decay rate leads to the noise scale decaying sharply in this case and therefore resulting in an insufficient training time, which may also degrades the accuracy. Moreover, we show a comparable results for these algorithms in Table.~\ref{difference} and Table.~\ref{difference1}.
\begin{table}
\centering
\caption{Average accuracy for dynamic schedules in the MNIST dataset: RDP-GAN ($\epsilon=0.5$); Time ($k=0.05$); Step ($k=0.6$); Exp ($k=0.01$); ANT-RDP ($k=0.8$)} \label{difference}
\resizebox{80mm}{5mm}{
\begin{tabular}{|c|c|c|c|c|c|c|}
  \hline
  {} & No Privacy & RDP-GAN& Time& Step& Exp& ANT-RDP  \\
  \hline
  Iterations&1000& 1000 & 657 & 564 & 902 & 868 \\
  \hline
  Accuracy & 0.995 & 0.863 & 0.874 & 0.876 & 0.898 & 0.908\\
  \hline
\end{tabular}}
\end{table}

\begin{table}
\centering
\caption{Average accuracy for dynamic schedules in the Adult dataset: RDP-GAN ($\epsilon=0.5$); Time ($k=0.04$); Step ($k=0.5$); Exp ($k=0.01$); ANT-RDP ($k=0.7$)} \label{difference1}
\resizebox{80mm}{5mm}{
\begin{tabular}{|c|c|c|c|c|c|c|}
  \hline
  {} & No Privacy & RDP-GAN & Time & Step & Exp & ANT-RDP   \\
  \hline
  Iterations&1000& 1000 &632 & 532 & 894 & 845 \\
  \hline
  Accuracy & 0.964 & 0.825 & 0.836 & 0.828 & 0.839 & 0.843\\
  \hline
\end{tabular}}
\end{table}
From the tables we can find that all the dynamic schedules can achieve a better performance than RDP-GAN, with smaller cost on the training iterations. The reason behind this is that the dynamic algorithm will adjust the scale of the added noise according to the testing performance, and it will cost more privacy budget when noise scale becomes smaller in each iteration, thus leading to less iterations. Moreover, the proposed ANT-RDP can obtain the highest accuracy on both datasets while it does not need the largest iterations. For example, compared with the Exp algorithm, the ANT-RDP has a better accuracy, i.e., there is a $0.4\%$ performance gain in the Adult dataset, and $0.9\%$ in the MNIST dataset, respectively, while needs less iterations, i.e., they are $49$ and $34$ less in each dataset.
\section{Related Work}
In this section, we provide a brief literature review of relevant topics: differential privacy and differentially private deep learning.
\subsection{Differential Privacy}
Differential privacy (DP) \cite{dwork2014algorithmic} and related algorithms have been widely studied in the literatures. The work \cite{dwork2006calibrating} laid the theoretical foundations of many DP studies by adding noise to make the maximum change of data related functions. Another related frameworks that adds noise on gradient are \cite{song2013stochastic} and its followed work \cite{song2015learning}, which studied to obtain information from models with DP requirements using stochastic gradient. In \cite{zhu2017differentially}, a comprehensive and structured overview of DP data publishing and analysis show several possible future directions and applications. In addition, to tightly analyze the composition, especially on the tails of the privacy loss, the work \cite{mironov2017renyi} gave a new privacy definition, named R\'{e}nyi DP. The R\'{e}nyi DP is not only a natural generalization of pure DP as it shares many properties with DP, but also has the capability to capture a more accurate aggregate privacy loss by the composition theorem.
\subsection{Differentially Private Deep Learning}
Abdadi et al. \cite{abadi2016deep} proposed a differentially private SGD algorithm for deep learning to offer provable privacy guarantees on the output model, and the work \cite{yu2019differentially} used the concentrated DP \cite{dwork2016concentrated}, which is in spirt of RDP, to provide cumulative privacy loss estimation over computations. Moreover, the applications of DP in GAN, such as \cite{xie2018differentially,8636556}, have well addressed information leakage by adding noises to gradients during training. They also can produce data samples with good quality under reasonable privacy budgets.

Different from these solutions, our work can obtain an more accurate estimation of privacy loss. The properties of RDP is first used to apply in the GAN. Moreover, we provide a naturally succeed method without artificial clipping on the noise-added gradients, which may destroy the learning performance.
\section{Conclusion}
In this work, we proposed a  R\'{e}nyi-differentially private-GAN (RDP-GAN), which achieves differential privacy (DP) under the framework of GAN by carefully adding random Gaussian noises on the value of the loss function during training. The advantages of applying perturbation on the loss function are three-fold: i) loss values are exchanged by the generator and discriminator directly which need more protection than parameters; ii) adding noise on the loss value can provide explicit privacy protection; and iii) it does not need value clipping on the parameters, so that the stability can be enhanced. We theoretically obtained the analytical results of the total privacy loss considering the subsampling method and multiple iterations of training with equal privacy budget allocation to each iteration. In addition, in order to alleviate the negative impact brought by the noise injection, we improved the proposed algorithm by designing an adaptive noise tuning algorithm, which can adaptively change the value of added noise according to the testing accuracy. Through comprehensive experimental results, we verified that the proposed algorithm can achieve a higher data utility compared with the existing DP-GAN algorithm at the same DP protection level.
\begin{appendices}
\section{Proof of Theorem~\ref{22}}
Let $F_d$ denote the loss function of discriminator, $X$ and $X'$ are adjacent datasets which only differ one element $x'$. Thus, the loss value of these two adjacent datasets after noise perturbation are expressed as:
\begin{equation} \label{loss_value}
\begin{split}
&L_d\left( {W;X} \right) = F_d\left( {W;X} \right) + \mathcal{N}(0,{\sigma ^2});\\
&L'_d\left( {W';X'} \right) = F_d'\left( {W';X'} \right) + \mathcal{N}(0,{\sigma ^2}),
\end{split}
\end{equation}
where $W$ denotes the parameter incurring in the neural network.

From Eq.~(\ref{loss_value}), we know $L_d\left( {W;X} \right)$ is a summation of two independent random variables. Using the Gaussian approximation approach in \cite{ding2017dna}, the summation of these two RVs can be treated as another RV which also follows a Gaussian distribution. Moreover, without loss of generality, we assume that $\sum_{i \in \left| X \right|\backslash [x']} {F}_d({x_i}) = 0$. Thus $L_d\left( {W;X} \right)$ and $L_d'\left( {W';X'} \right)$ are distributed identically except for the first coordinate and hence we have a one-dimensional problem \cite{abadi2016deep}. Let $u_0$ denote the pdf of $\mathcal{N}(0,\sigma^2)$ and let $u_1$ denote the pdf of $\mathcal{N}(\Delta S,\sigma^2)$. Thus we have
\begin{equation}
\begin{split}
&L'\left( {W';X'} \right) \sim v_0\triangleq u_0;\\
&L\left( {W;X} \right) \sim v_1\triangleq (1-q)\mu_0 + qu_1,
\end{split}
\end{equation}
where $q$ is the sampling rate and $\Delta S$ denotes the sensitivity bound on the loss function. In addition, $q=\frac{m}{n}$, where $m$, $n$ denotes the sampling and total size of dataset, respectively.

According to the definition of R\'{e}nyi differential privacy in Eq.~(\ref{RDP}), $\epsilon$ of one iteration can be expressed as\\
$\max \left( {{D_\alpha }\frac{{L_d\left( X \right)}}{{L_d\left( {X'} \right)}},{D_\alpha }\frac{{L_d\left( {X'} \right)}}{{L_d\left( X \right)}}} \right)$. For simplicity, we rewrite the \\$L_d\left( {W;X} \right)$ as $L_d\left( X \right)$.

To obtain $\epsilon$, we first calculate ${D_\alpha }\left( {\frac{{L_d(X)}}{{L_d(X')}}} \right)$ as
\begin{small}
\begin{equation}
\begin{split}
&{D_\alpha }\left( {\frac{{L_d(X)}}{{L_d(X')}}} \right) = \frac{1}{{\alpha  - 1}}\log {E_{x \sim L_d(X')}}{\left( {\frac{{L_d(X)}}{{L_d(X')}}} \right)^\alpha }\\
&= \frac{1}{{\alpha  - 1}}\log {\int {{u_0}\left[ {\frac{{\left( {1 - q} \right){u_0} + q{u_1}}}{{{u_0}}}} \right]} ^\alpha }dx \\
&= \frac{1}{{\alpha  - 1}}\log {\int {{u_0}\left\{ {1 - q + q\exp \left[ {\frac{1}{{2{\sigma ^2}}}\left( {2x\Delta S - \Delta {S^2}} \right)} \right]} \right\}} ^\alpha }dx \\
&\overset{(a)}= \frac{1}{{\alpha  - 1}}\log \int {{u_0}\sum\limits_{k = 0}^\alpha  {\left( {\begin{array}{*{20}{c}}
\alpha \\
k
\end{array}} \right)} {{\left( {1 - q} \right)}^{k - \alpha }}{q^k}\exp \left( \blacktriangle \right)} dx \\
&=\frac{1}{{\alpha  - 1}}\log \sum\limits_{k = 0}^\alpha  {\left( {\begin{array}{*{20}{c}}
\alpha \\
k
\end{array}} \right)} {\left( {1 - q} \right)^{k - \alpha }}{q^k}\exp \left[ {\frac{{k\Delta {S^2}(k - 1)}}{{2{\sigma ^2}}}} \right]\\
&\overset{(b)}\leq \frac{1}{{\alpha  - 1}}\log \sum\limits_{k = 0}^\alpha  {\left( {\begin{array}{*{20}{c}}
\alpha \\
k
\end{array}} \right)} {\left( {1 - q} \right)^{k - \alpha }}\left[ {q\exp \left( {\frac{{\Delta {S^2}\alpha }}{{2{\sigma ^2}}}} \right)} \right]\\
&=\frac{\alpha}{{\alpha  - 1}}\log {\left\{ {1 + q\left[ {\exp \left( {\frac{{\Delta {S^2}\alpha }}{{2{\sigma ^2}}}} \right) - 1} \right]} \right\} }\\
&\overset{(c)} \leq {\color{black}\frac{\alpha }{{\alpha  - 1}}q\left[ {\exp \left( {\frac{{\Delta {S^2}\alpha }}{{2{\sigma ^2}}}} \right) - 1} \right] }\\
&\overset{(d)} \leq {\color{black}\frac{\alpha }{{\alpha  - 1}}q\left[ {\frac{{\Delta {S^2}\alpha }}{{2{\sigma ^2}}} + O\left( {\frac{{\Delta {S^4}{\alpha ^2}}}{{4{\sigma ^4}}}} \right)} \right]}
\leq \frac{q\alpha^2\Delta S^2}{2(\alpha-1)\sigma^2},
\end{split}
\end{equation}
\end{small}
where step (a) uses the binomial expansion, $\blacktriangle$ is denoted by ${\frac{k}{{2{\sigma ^2}}}\left( {2x\Delta S - \Delta {S^2}} \right)}$, and step (b) is obtained by $k-1 \leq \alpha$. Step (c) uses Taylor series expansion,
and step (d) is obtained by Taylor series expansion.

We then calculate ${D_\alpha }\left(\frac{{L_d\left( {X'} \right)}}{{L_d\left( X \right)}}\right)$ as
\begin{small}
\begin{equation}
\begin{split}
&{D_\alpha }\left(\frac{{L_d\left( {X'} \right)}}{{L_d\left( X \right)}}\right)= \frac{{\rm{1}}}{{\alpha {\rm{ - 1}}}}\log {E_{x \sim L_d\left( X \right)}}\left[ {\frac{{L_d(X')}}{{L_d(X)}}} \right]\\
&= \frac{1}{{\alpha  - 1}}\log {\int {{u_0}\left[ {\frac{{{u_0}}}{{(1 - q){u_0} + q{u_1}}}} \right]} ^{\alpha  - 1}}dx\\
&= \frac{1}{{\alpha  - 1}}\log \int {{u_0}{{\left[ {\frac{{(1 - q){u_0} + q{u_1}}}{{{u_{\rm{0}}}}}} \right]}^{{\rm{1 - }}\alpha }}{\rm{dx}}} \\
&= \frac{1}{{\alpha  - 1}}\log {\int {{u_0}\left\{ {1 - q + q\exp \left[ {\frac{1}{{2{\sigma ^2}}}\left( {2x\Delta S - \Delta {S^2}} \right)} \right]} \right\}} ^{1-\alpha} }dx \\
& \overset{(a)} = \frac{1}{\alpha  - 1}\log \int {u_0}\left\{ \sum\limits_{k = 0}^\infty  ( - 1)^k\left( {\begin{array}{*{20}{c}}
{\alpha  + k - 2}\\
k
\end{array}} \right){q^k}\blacktriangledown \right\}dx \\
& \overset{(b)} \leq \frac{1}{{\alpha  - 1}}\log \sum\limits_{k = 0}^\infty  {{{( - 1)}^k}\left( {\begin{array}{*{20}{c}}
{\alpha  + k - 2}\\
k
\end{array}} \right){q^k}\exp \left( \blacktriangle \right)} {\left( {1 - q} \right)^{1 - k - \alpha }}\\
&=\frac{1}{{\alpha  - 1}}\log {\left[ {1 - q + q\exp \left( {\frac{{\Delta {S^2}\alpha }}{{2{\sigma ^2}}}} \right)} \right]^{1 - \alpha }}\\
&=- \log \left\{ {1 + q\left[ {\exp \left( {\frac{{\Delta {S^{\rm{2}}}\alpha }}{{{\rm{2}}{\sigma ^{\rm{2}}}}}} \right){\rm{ - 1}}} \right]} \right\}
<{D_\alpha }\left( {\frac{{L_d(X)}}{{L_d(X')}}} \right),
\end{split}
\end{equation}
\end{small}
where step (a) is obtained by ${(x + a)^{ - n}} = \\
\sum\limits_{k = 0}^\infty  {{{( - 1)}^k}\left( {\begin{array}{*{20}{c}}
{n + k - 1}\\
k
\end{array}} \right)} {x^k}{a^{ - n - k}}$, and $\blacktriangledown$ is denoted by \\
$\exp \left[ {\frac{k}{{2{\sigma ^2}}}\left( {2x\Delta S - \Delta {S^2}} \right)} \right] {{\left( {1 - q} \right)}^{1 - k - \alpha }}$. Step (b) is obtained by $k-1\leq \alpha$, and $\blacktriangle$ is denoted by $\left[ {\frac{k}{{2{\sigma ^2}}}\left( {2x\Delta S - \Delta {S^2}} \right)} \right]$.

As a result, $\epsilon=\frac{q\alpha^2\Delta S^2}{2(\alpha-1)\sigma^2}$ which concludes the proof.

\section{Proof of Lemma~\ref{33}}
By applying Proposition~\ref{comp} to their composition, we can obtain that for all $\alpha > 1$,
\begin{equation}
{D_\alpha }\left[ {F_d\left( X \right)||F_d\left( {X'} \right)} \right] \le 2\alpha n{\epsilon ^2}.
\end{equation}
Denote $\Pr \left[ {F_d(X) \in S} \right]$ by $P$ and  $\Pr \left[ {F_d(X') \in S} \right]$ by $Q$ and consider two cases.

Case I: $\log 1/Q \geq \epsilon^2n$. According to the Proposition~\ref{privacy} and choosing $\alpha=\sqrt{\log1/Q}/(\epsilon\sqrt{n})\geq 1$, we have
\begin{equation}
\begin{split}
P &\le {\left\{ {\exp \left\{ {{D_\alpha }\left[ {F_d\left( X \right)||F_d\left( {X'} \right)} \right]} \right\} \cdot Q} \right\}^{1 - 1/\alpha }}\\
&  \le \exp \left[ {2\left( {\alpha  - 1} \right)n{\epsilon ^2}} \right] \cdot {Q^{1 - 1/\alpha }}\\
&  \overset{(a)}< \exp \left[ {2\epsilon \sqrt {n\log 1/Q}  - 2n{\epsilon ^2} - \left( {\log Q} \right)/\alpha } \right] \cdot Q\\
& < \exp \left( {2\epsilon \sqrt {n\log 1/Q} } \right) \cdot Q,
\end{split}
\end{equation}
where step (a) is obtained by substituting $\alpha=\sqrt{\log1/Q}/(\epsilon\sqrt{n})
\geq 1$ into the equation.

Case II: $\log 1/Q < \epsilon^2n$. This case follows trivially, since the right hand of Eq.~(\ref{11111}) is larger than 1 as:
\begin{equation}
\exp \left( {2\epsilon \sqrt {n\log 1/Q} } \right) \cdot Q \ge \exp \left( {2\log 1/Q} \right) \cdot Q = 1/Q > 1.
\end{equation}
\section{Proof of Theorem~\ref{44}}
Let $X$ and $X'$ be two adjacent inputs, and $S$ be some subset results of the loss function $F_d$. To argue $(\epsilon_{{\emph{d}}},\delta)$-differential private of $F_d$, we need to verify that
\begin{equation}
\Pr \left[ {F_d(X) \in S} \right] \le \exp \left( \epsilon_{{\textrm{d}}}  \right)\Pr \left[ {F_d\left( {X'} \right) \in S} \right] + \delta.
\end{equation}
According to Lemma~\ref{33}, we know that
\begin{equation}
\begin{split}
&\Pr \left[ {F_d(X) \in S} \right] \le \\
&\exp \left\{ {2\epsilon \sqrt {n\log /\Pr \left[ {F_d(X') \in S} \right]} } \right\} \cdot \Pr \left[ {F_d\left( {X'} \right) \in S} \right].
\end{split}
\end{equation}
Denote $\Pr \left[ {F_d(X) \in S} \right]$ by $P$ and  $\Pr \left[ {F_d(X') \in S} \right]$ by $Q$ and we can obtain that
\begin{equation} \label{did}
P \le \exp \left( {2\epsilon \sqrt {n\log 1/Q} } \right) \cdot Q.
\end{equation}
Eq.~(\ref{did}) can be divided into two cases as follows.

Case I: $8\log 1/\delta  > \log 1/Q$. Then Eq.~(\ref{did}) can be derived by
\begin{equation}
\begin{split}
P &\le \exp \left( {2\epsilon \sqrt {8n\log 1/\delta } } \right) \cdot Q\\
& = \exp \left( {4\epsilon \sqrt {2n\log 1/\delta } } \right) \cdot Q
\leq \exp({\epsilon_{\textrm{d}}})+\delta.
\end{split}
\end{equation}

Case II: $8\log 1/\delta  \leq \log 1/Q$. Then Eq.~(\ref{did}) can be derived by
\begin{equation}
\begin{split}
P & \overset{(a)} \le \exp \left( {2\sqrt {\log 1/\delta  \cdot \log 1/Q} } \right) \cdot Q \\
& \le \exp \left( {\sqrt {1/2} \log 1/Q} \right) \cdot Q\\
&=Q^{1-1/\sqrt{2}}\leq Q^{1/8}=\delta
\leq \delta+\exp({\epsilon_{\textrm{d}}}),
\end{split}
\end{equation}
where step (a) is obtained by substituting the condition $8\log 1/\delta  \leq \log 1/Q$ into the equation.

This concludes the proof.
\end{appendices}
\bibliographystyle{IEEEtran}
\bibliography{rdp-gan}
\end{document}